\DeclareMathAlphabet{\mathscrbf}{OMS}{mdugm}{b}{n}
\DeclareMathOperator*{\argmax}{argmax}
\DeclareMathOperator*{\E}{\mathbb{E}}
\DeclareMathOperator*{\p}{\mathbb{P}}
\newtheorem{theorem}{Theorem}
\newtheorem{definition}{Definition}
\newtheorem{lemma}{Lemma}
\newtheorem{proposition}{Proposition}
\newtheorem{assume}{Assumption}
\theoremstyle{definition}
\newtheorem{remark}{Remark}
\newcommand{\id}{{\mathbb{I}}}
\newcommand\as{{\alpha}}
\newcommand\bs{{\beta}}
\newcommand\cA{{\mathcal A}}
\newcommand\cC{{\cal C}}
\newcommand\cE{{\mathcal E}}
\newcommand\cP{{\mathcal P}}
\newcommand\cX{{[0,1]^d}}
\newcommand\cY{{[L]}}
\newcommand\N {\mathbb{N}}
\newcommand\z {\boldsymbol{z}}
\newcommand\sg {\boldsymbol{\sigma}}
\newcommand\Cr {\mathscrbf{C}_r}
\newcommand{\ind}{\perp\!\!\!\perp} 
\newcommand\ee {\varepsilon_0}
\newcommand\mh {\mathcal{M}'}
\newcommand\ms {\mathcal{M}}
\newcommand\esd {{\E\limits_{S|\z,\sg,\hat{h}}}}
\newcommand\sd {{P_{S|\z,\sg,\hat{h}}}}
\newcommand\esds {{\E_{S|\z,\sg,\hat{h}^*}}}
\newcommand\sdi {{P_{S_\cC|z_\cC,\sigma_\cC,\hat{h}}}}
\newcommand\sdif {{P_{S_\cC|z_\cC=0,\sigma_\cC,\hat{h}}}}
\newcommand\sdib {{P_{S_\cC|z_\cC=1,\sigma_\cC,\hat{h}}}}
\newcommand\psdi {{\p_{S_\cC|z_\cC,\sigma_\cC,\hat{h}}}}
\newcommand\esdi {{\E_{S_\cC|z_\cC,\sigma_\cC,\hat{h}}}}
\newcommand\sdis {{P^{\cC}_{X|\hat{h}}}}
\newcommand\sdil {{P_{Y|X,z_\cC,\sigma_\cC}}}
\newcommand\bcl {{\bf\mathcal{L}}}
\newcommand\cI {{\mathcal{I}}}
\begin{document}

%

%

\twocolumn[

\aistatstitle{Nuances in Margin Conditions Determine Gains in Active Learning}

\aistatsauthor{Samory Kpotufe \And Gan Yuan \And  Yunfan Zhao}
\aistatsaddress{Columbia University \And Columbia University \And Columbia University} ]

\begin{abstract}
  We consider nonparametric classification with smooth regression functions, where it is well known that notions of margin in $\E [Y|X]$ determine fast or slow rates in both active and passive learning. Here we elucidate a striking distinction between the two settings. Namely, we show that some seemingly benign nuances in notions of margin---somehow involving the uniqueness of the Bayes classifier, and which have no apparent effect on rates in passive learning---determine whether or not \emph{any} active learner can outperform passive learning rates. In particular, for \emph{Audibert-Tsybakov's margin condition} (allowing general situations with non-unique Bayes classifiers), no active learner can gain over passive learning in commonly studied settings where the marginal on $X$ is near uniform. Our results thus negate the usual intuition from past literature that active rates should generally improve over passive rates in nonparametric settings. 
\end{abstract}

\section{INTRODUCTION} \label{sec:intro}
Margin conditions, i.e., conditions quantifying the gap between class probabilities, have been known to determine the hardness of classification both in passive learning, i.e., where the learner only has access to i.i.d. data \citep{mammen1999smooth, tsybakov2004optimal, massart2006risk, audibert2007fast}, and in active learning where the learner can adaptively query labels \citep{castro2008minimax, hanneke2011rates, koltchinskii2010rademacher, Minsker12, hanneke2015minimax, wang2016noise, yan2016active, Locatelli-et-al17, locatelli2018adaptive}. Naturally, a main concern in active learning is in guaranteeing savings over passive learning, and here we show that some basic distinctions between margin conditions---seemingly having to do with the uniqueness of the Bayes classifier, and which appear to have gone un-noticed---determine whether savings are possible at all over passive rates in nonparametric settings. 

Here we consider the setting of nonparametric classification with smooth regression functions, i.e., one where $\eta_y(x) \doteq \p[Y = y|X = x]$ is $\alpha$-H\"older continuous for every label $y\in [L]$.  Two main notions of margin have appeared interchangeably in passive learning in this setting; assume $y = 1$ or $2$: 
$$ \text{(\rm i)} \ \p( |\eta_1 - \eta_2| \leq \tau) \lesssim \tau^\beta
\text{,(\rm ii)}\  \p( 0<|\eta_1 - \eta_2|\leq \tau) \lesssim \tau^\beta,$$
for some \emph{margin parameter} $\beta > 0$. Both definitions are termed \emph{Tsybakov's low noise or margin condition} without distinction in the literature. However, excluding $0$ as in (\rm ii) is more natural since any classifier $\hat h$ has the same error as Bayes in those regions where $\eta_1 = \eta_2$, i.e., where the Bayes is not unique. On the other hand, (\rm i) implies uniqueness (up to measure $0$) of the Bayes classifier, as seen by letting $\tau \to 0$. As such, (\rm ii) admits 
more general settings with non-unique Bayes, and is thus preferred in the seminal result of \cite{audibert2007fast} on margins in nonparametrics. 

Interestingly, using (\rm i) or (\rm ii), the minimax risk is the same in passive learning, e.g., $O(n^{-\alpha(\beta+1)/(2\alpha + d)})$ when $P_X$ is uniform, see \cite{audibert2007fast}. However, as we show, a sharp distinction emerges in active learning, where condition (\rm ii) leads to two regimes in terms of savings: 
 
\ $\bullet$ Under the common \emph{strong density} assumption, relaxing uniform $P_X$, no active learner can achieve a better rate---beyond constants---than the minimax passive rate (Theorem \ref{thm:lower}). In contrast, as first shown in \cite{Minsker12}, condition (\rm i) always leads to strictly faster rates than passive. 


\ $\bullet$ For general $P_X$, active learners can strictly gain over the worst case passive rate (Theorem \ref{thm:aggregation}). Our rates for (\rm ii) are then similar to those under (\rm i) shown in \cite{Locatelli-et-al17}.

Previous work in nonparametric active learning invariably adopted condition (\rm i) which makes sense in light of our results since savings cannot be shown otherwise. Our results in fact further highlight two sources of savings in active learning, owing to the distinction between the above two bulletted regimes: a), an active learner can evenly sample the decision boundary while i.i.d. samples might miss it under general $P_X$, and b), an active learner can quickly stop sampling in those regions where there is little to gain in excess error over the Bayes, having discovered a label or labels with sufficiently low excess error. Under near uniform $P_X$, the source of saving a) is gone since even i.i.d. data has good coverage of the decision boundary, while b) remains, although in a limited form: an active learner can only significantly benefit from regions of high margin, while it cannot effectively identify regions where multiple labels are nearly equivalent (e.g., non-unique Bayes) which it should in fact also give up on. 

Here we emphasize that our results do not preclude limited gains in practice under uniform $P_X$, since minimax rates fail to identify constants. In particular, we can refine the margin conditions to distinguish between regions of high margin and those with equivalent labels, and derive a refined upper-bound, under uniform $P_X$, that highlight such limited gains over passive learning (Theorem \ref{thm:aggregation_strong_density}). 

Finally, our upper-bounds are for general multi-class active learning, requiring minor modification over past algorithms \citep[e.g., those of ][]{Locatelli-et-al17}, namely additional book-keeping (Section \ref{sec:upper-bounds}), and refined correctness arguments. On the other hand, our main Theorem \ref{thm:lower} requires considerable new technicality over usual lower-bound arguments for active learning, involving careful \emph{randomization} of hard regions of space (see discussion in Section \ref{sec:lowerbound}). 

Our results leave open whether similar nuances in regimes of gain exist in parametric settings, e.g., under bounded VC classes, where many active learners have been shown to gain under \emph{sharp} margin conditions such as (\rm{i}) 
\citep{hanneke2011rates, koltchinskii2010rademacher, wang2016noise}. 

\paragraph{Paper Outline.} We start in Section \ref{sec:problemsetting} with technical setup, followed by an overview of main results in Section \ref{sec:mainresults}, and analysis in Section \ref{sec:analysis}. Due to space constraints, some proofs are relegated to the appendix.


\section{PROBLEM SETTING} \label{sec:problemsetting}

We consider a joint distribution $P_{X,Y}$ on $[0,1]^d \times [L]$, where 
we use the short notation $[L] \doteq \{1, \ldots, L\}$ for $L \in \N$.
Define the regression function $ \eta(x) \doteq  (\eta_1(x), \ldots, \eta_L(x))$ where $\eta_y(x) \doteq \p(Y=y|X=x)$ for $y \in [L]$. 
\vspace{0.1cm}
\begin{definition} \label{asmp:smoothness}
	The regression function $\eta$ is {\bf $(\lambda, \alpha)$-H\"{o}lder} continuous for some $\alpha\in (0, 1], \lambda>0$, if.:  
	$$\forall x, x' \in \cX, \quad \lVert\eta(x) - \eta(x')\rVert_\infty \le \lambda \lVert x - x'\rVert^{\alpha}_\infty\,.$$
\end{definition}

\begin{remark}
{For simplicity of presentation, we assume $\alpha\leq 1$ in Theorem~\ref{thm:aggregation_strong_density}. The case of $\alpha > 1$, can be handled simply by replacing the averaging in each cell with higher order polynomial regression (as done e.g. in Locatelli et al. 2017), but does not add much to the main message despite the added technicality.}
\end{remark}

\begin{definition}\label{def:partition} For $r = 2^{-k}$ for $k \in \N$, define the partition $\Cr$ of $[0, 1]^d$ 
as the collection of hypercubes $\cC$ of the form 
$ 
\prod_{i \in d} [(l_i - 1) r, l_i r)
$, $l_i \in [1/r]$. We call $\Cr$ a {\bf dyadic partition} at \emph{level} $r$. 
\end{definition}


{
\begin{definition} \label{asmp:strong_density}
		$P_X$ is said to satisfy a {\bf strong density condition} if there exists some $c_d > 0$ such that $\forall r \in \{2^{-k}: k \in \N\}$ and $\cC \in \Cr$ with $P_X(\cC) > 0$, we have
		\begin{align*}
			P_X(\cC) \ge c_d \cdot r^d\,.
		\end{align*}
\end{definition}
}

The condition clearly holds for $P_X = {\cal U}[0, 1]^d$, or simply has lower-bounded density, and is adapted from other works on active learning \citep{Minsker12, Locatelli-et-al17}. 

\subsection{Active Learning}

We consider active learning under a fixed budget $n$ of queries. At each sampling step, the learner may query the label of any point $x\in \text{support}(P_X)$ and a label $Y$ is returned according to the conditional $P_{Y| X = x}$. We let $S \equiv \{(X_i, Y_i)\}_{i=1}^n$ denote the resulting sample. A classifier $\hat h_n = \hat h_n (S): [0, 1]^d \mapsto [L]$ is then returned.

We evaluate the performance of an active learner by the excess risk of the final classifier $\hat{h}_n$ it outputs. Throughout the paper, we use the notation $\hat{h}$ for the active learning algorithm, and $\hat{h}_n$ for the final classifier the algorithm $\hat{h}$ returns.

\begin{definition}
	We consider the 0-1 risk of a classifier $h: \cX \mapsto \cY$, namely $R(h) \doteq \p (h(X) \neq Y)$, which is minimized by the so-called Bayes classifier $h^*(x) \in \argmax_{y}  \p(Y = y | X = x)$. The {\bf excess risk} $\cE(h) \doteq R(h) - R(h^*)$ is then given by:  
	$$\cE(h) = \E \ [\max_{y \in \cY}\eta_y(X)-\eta_{h(X)}(X) ].$$
	
\end{definition}

\subsection{Margin Assumption}

We start with a notion of \emph{soft} margin. 

\vspace{0.1cm}
\begin{definition} \label{def:margin}

 
 
Let $\eta_{(1)}\geq \cdots \geq \eta_{(L)}$ denote order statistics on $\eta_y, y \in [L]$. The {\bf margin} at $x$ is defined as $\ms(x) \doteq \eta_{(1)}(x)- \max_{y: \eta_y(x)\neq \eta_{(1)}(x)}\eta_y(x)$. In the case where $\forall y \in \cY, \eta_y(x) = 1/L $, we use the convention that $\max$ of empty set is $-\infty$ so that  $\ms(x) = \infty$. 
\end{definition}


\begin{definition} \label{asmp:class_tsy}
    $P_{X, Y}$ satisfies the {\bf Tsybakov's margin condition} (TMC) with $C_\bs > 0$, $\bs \geq 0$, if :
        \begin{align}\label{eqn:class_tsy}
            \forall \tau > 0, \quad  P_X \left(\{x: \ms(x) \leq \tau \}\right)  & \le C_{\beta} \tau^{\beta}. 
        \end{align}
\end{definition}
    The above extends TMC for $L=2$ to general $L$: when $L=2$, the margin $\ms(x) = |\eta_1(x) - \eta_2(x)|$ when $\eta_1(x) \neq \eta_2(x)$ and $\ms(x) = \infty$ when $\eta_1(x) = \eta_2(x) = 1/2$. The above thus coincides condition \rm{(ii)} of Section \ref{sec:intro}, i.e., admits non-unique Bayes as in \cite{audibert2007fast}, but here we allows general $L\geq 2$.
\section{OVERVIEW OF RESULTS} \label{sec:mainresults}

\subsection{No Gain under Strong Density Condition}\label{sec:lowerbound}
Surprisingly, under the Audibert-Tsybakov's margin condition, no active learner can gain in excess risk rate over their passive counterparts when we assume the strong density condition for $P_X$. For simplicity, we consider the binary case. 

\begin{theorem} \label{thm:lower}
	Consider a binary classification problem, i.e, $L=2$. Let $c_d,  \as \in (0,1], \lambda, \bs >0, C_\beta > 1$ with $\as\bs \le d$ and $\Xi = (c_d, \lambda, \as, C_\bs,\bs)$. Let $\cP(\Xi)$ denote the class of distributions on $\cX \times \{0,1\}$ such that:
	
	\ $\bullet$ $P_X$ satisfies a strong density condition with $c_d$;
	
	\ $\bullet$ the regression function $\eta(x)$ is $(\lambda,\alpha)$-H\"{o}lder;
	
	\ $\bullet$ $P_{X, Y}$ satisfies TMC with parameter $(\beta, C_\beta)$.

	Then, $\exists C_1>0$, independent on $n$, such that:  
	$$\inf_{\hat{h}} \sup_{P_{X, Y} \in \cP(\Xi)} \E\ \cE (\hat{h}_n) \ge C_1 n^{-\frac{\as(\bs+1)}{2\as+d}}, $$
	where the infimum is taken over all active learners, and the expectation is taken over the sample distribution, determined by $P$ and $\hat{h}$ jointly. 
\end{theorem}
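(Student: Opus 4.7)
The plan is to construct a random Bayesian family of joint distributions $\{P^{\sigma}\}$, indexed by a vector $\sigma \in \{-1,0,+1\}^M$ with $M = r^{-d}$ and $r = c_{0}\, n^{-1/(2\alpha+d)}$, and then lower bound a Bayes risk against a prior that exploits the ``null regions'' permitted only by the present version (ii) of the margin condition. Partition $[0,1]^{d}$ into dyadic cubes $\cC_i$ of side $r$ and equip each with a smooth bump $\phi_{i}$ of peak amplitude $\lambda' r^{\alpha}$, supported strictly inside $\cC_i$ and $(\lambda,\alpha)$-H\"older (with $\lambda' \le \lambda$ to be chosen small). Set $\eta_{1}^{\sigma}(x) = \tfrac12 + \sigma_{i}\phi_{i}(x)$ on $\cC_i$, take $P_{X}$ uniform on $[0,1]^d$ (so strong density is immediate and $\eta^\sigma$ is globally $(\lambda,\alpha)$-H\"older), and note that whenever $\sigma_i = 0$ the entire cube has $\eta_{1} = \eta_{2} = 1/2$ and margin $\ms(x) = \infty$, contributing nothing to $\p(\ms \le \tau)$.

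For the prior I use $\sigma_i$ i.i.d.\ with $\p(\sigma_{i} = \pm 1) = p/2$ and $\p(\sigma_i = 0) = 1 - p$, where $p \asymp (\lambda')^{\beta} r^{\alpha\beta}$ is chosen so that $pM \asymp (\lambda')^\beta r^{\alpha\beta - d}$ and, on a high-probability Chernoff event, the number of bumps stays at that order and TMC holds with constant $C_\beta$. Any classifier $\hat h_{n}$ induces $\hat\sigma_{i} \in \{\pm 1\}$ on each cube by majority labelling, and a routine bump integration gives $\cE(\hat h_n) \ge c_{1}\lambda' r^{\alpha+d} \sum_i \mathbf{1}\{\sigma_i\ne 0,\hat\sigma_i\ne\sigma_i\}$. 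For each cube, a binary Le Cam bound yields $\p_{\pi}(\sigma_i\ne 0,\hat\sigma_i\ne \sigma_i) \ge (p/2)\bigl(1 - \|P^{+i}-P^{-i}\|_{TV}\bigr)$, where $P^{si}$ denotes the distribution of the observed sample $S$ conditional on $\sigma_{i} = s$ with $\sigma_{-i}\sim\pi_{-i}$ marginalized out. Everything reduces to showing $\sum_i \|P^{+i}-P^{-i}\|_{TV} \le M/2$.

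The main obstacle---and the place where the randomization is essential---is that the direct bound $\|P^{+i}-P^{-i}\|_{TV} \le \sqrt{KL(P^{+i}\|P^{-i})/2}$ leads via the adaptive chain rule to $KL \le C(\lambda')^{2}r^{2\alpha}\,\E_\pi[N_i \mid \sigma_i = +1]$, with $\sum_i \E_\pi[N_i\mid\sigma_i=+1] \le n/p$: this $1/p$ blow-up is exactly the regime in which an active learner would appear to outperform passive, and it defeats the later Cauchy-Schwarz step. My plan is to route the TV through the null distribution,
$$\|P^{+i}-P^{-i}\|_{TV} \;\le\; \|P^{+i}-P^{0i}\|_{TV} + \|P^{0i}-P^{-i}\|_{TV} \;\le\; 2\sqrt{KL(P^{0i}\|P^{+i})/2},$$
and to establish $KL(P^{0i}\|P^{+i}) \le C(\lambda')^{2}r^{2\alpha}\,\E_{\pi}[N_i \mid \sigma_i = 0]$. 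The key lemma is a posterior-factorization argument: under the i.i.d.\ prior the likelihood factorizes cube-by-cube, so given any realized history the posterior of $\sigma_j$ ($j \ne i$) does not depend on $\sigma_i$. Consequently $P^{0i}_{Y_t\mid\text{hist}}$ and $P^{+i}_{Y_t\mid\text{hist}}$ coincide for every query $q_t$ outside $\cC_i$, making those terms in the adaptive KL chain rule vanish identically. The payoff is $\sum_i \E_\pi[N_i\mid\sigma_i=0] \le n/(1-p) \approx n$, replacing the divergent $1/p$ factor by $1/(1-p)\approx 1$.

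Summing the TV bound and applying Cauchy-Schwarz then gives $\sum_i \|P^{+i}-P^{-i}\|_{TV} \lesssim \lambda' \sqrt{Mn r^{2\alpha}/(1-p)} = \lambda' M\,c_{0}^{\alpha+d/2}\cdot O(1)$, since $nr^{2\alpha}/M = c_0^{2\alpha+d}$. Picking $c_0$ (and, if needed, $\lambda'$) sufficiently small makes this at most $M/2$, yielding
$$\E_\pi \cE(\hat h_n) \;\ge\; \tfrac{c_1\lambda'}{4}\,pM\,r^{\alpha+d} \;\asymp\; (\lambda')^{\beta+1}\,r^{\alpha(\beta+1)} \;\asymp\; n^{-\alpha(\beta+1)/(2\alpha+d)}.$$
Restricting the Bayes average to the TMC-compliant Chernoff event (whose complement has negligible probability since $pM \to \infty$) gives $\sup_{P\in\cP(\Xi)} \E\cE(\hat h_n)$ of the same order. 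The hardest part of executing this plan will be the posterior-factorization step underlying the ``null-route'' KL bound: one has to verify carefully that the i.i.d.\ structure of $\pi$ survives adaptive conditioning on the query history, which is exactly how the randomization of bump locations prevents the learner from covertly targeting bumps.
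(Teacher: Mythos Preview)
Your construction and prior (cells with randomized bumps, $\sigma_i\in\{-1,0,+1\}$ with $\p(\sigma_i=0)=1-p$, $p\asymp r^{\alpha\beta}$, and a Chernoff restriction to the TMC-compliant event) coincide with the paper's setup almost exactly; the divergence is in how the core per-cell error bound is obtained. The paper takes a constructive route: it first reduces any learner to a cell-local Neyman--Pearson test (their Proposition~3), then combines an anti-concentration bound with a direct likelihood-ratio calculation showing that $\p(z_\cC=1\mid |S_\cC|\le m)$ stays of order $r^{\alpha\beta}$ even after observing up to $m\asymp r^{-2\alpha}$ labels (their Lemma~3); a pigeonhole on the budget then guarantees that at least $M/2$ cells receive at most $m$ samples. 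Your Le~Cam route via $KL(P^{0i}\Vert P^{+i})$ and the posterior-factorization lemma is a clean information-theoretic alternative: the factorization is exactly what the paper's Neyman--Pearson reduction encodes (only local samples are informative about $\sigma_i$), and routing TV through the null is the KL-level counterpart of their ``cannot tell $z_\cC=0$ from $z_\cC=1$'' lemma. Your approach avoids the explicit anti-concentration step and handles the budget constraint globally through $\sum_i\E[N_i\mid\sigma_i=0]\le n/(1-p)$, which is arguably tidier; the paper's argument, on the other hand, has the merit of explicitly identifying the Bayes-optimal per-cell decision.

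Two minor execution points. First, with $P_X$ exactly uniform on $[0,1]^d$, the reduction $\cE(\hat h_n)\ge c_1\lambda' r^{\alpha+d}\sum_i\mathbf{1}\{\sigma_i\ne 0,\hat\sigma_i\ne\sigma_i\}$ via ``majority labelling'' over the whole cube is not quite right: a classifier wrong on a bare majority of $\cC_i$ could be wrong only where $\phi_i$ is tiny. Define $\hat\sigma_i$ as the majority label on the bump's flat core instead, or adopt the paper's device of concentrating $P_X$ on small balls around cell centers where $\eta$ is constant. Second, with uniform $P_X$ the TMC verification for $\tau<\lambda' r^\alpha$ picks up mass from the bump transition shells and needs an Audibert--Tsybakov style computation; the paper's concentrated $P_X$ makes $\ms$ two-valued on the support and sidesteps this. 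Neither point affects your main argument.
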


Following the seminal results of \cite{audibert2007fast}, it is easy to show that a simple plug-in passive learner (e.g., a tree-based classifier) achieves the rate of $n^{-\frac{\as(\bs+1)}{2\as+d}}$ for any $P_{X, Y} \in \cP(\Xi)$.

Our main arguments depart from usual lower-bounds arguments in active learning \cite{castro2008minimax, Minsker12, Locatelli-et-al17} in that we do not work directly on constructing a suitable subset of $\cP(\Xi)$, but rather move to a larger class $\Sigma$ with non empty intersection $\Sigma_\beta$ with $\cP(\Xi)$. We then put a suitable measure on $\Sigma$ that concentrates on $\Sigma_\beta$; {importantly, this measure also encodes regions of $[0,1]^d$ where the Bayes is unique.} We then show that for any fixed sampling mechanism $\hat h$, the excess error of the classifier $\hat h_n$ is lower-bounded as in Theorem \ref{thm:lower}, in expectation under our measure on $\Sigma$, implying the statement of Theorem \ref{thm:lower} by concentration on $\Sigma_\beta$. A main difficulty remains in removing dependencies inherent in the observed sample $S$: this is done by decoupling the sampling $\hat h$ from the eventual classifier $\hat h_n$ by a reduction to simpler Neyman-Pearson type classifier $h^*_n$---with the same sampling mechanism as $\hat h$---whose error can be localized to regions of $[0, 1]^d$ and depends just on local $Y$ values, thanks to our choice of distributions in $\Sigma$ where little information is leaked across regions of space. This is all presented in Section \ref{sec:lowerboundanalysis}.


\subsection{Upper-Bounds} \label{sec:upper-bounds}

Theorem~1 indicates that the classical TMC is not enough to guarantee gains over passive learning, under strong density. Nonetheless, some gain can be shown under a refined margin condition that better isolates regions of space with unique Bayes label (Theorem \ref{thm:aggregation_strong_density}). Furthermore, under more general $P_X$, we show in Theorem \ref{thm:aggregation} that a better rate than passive can always be attained even under classical TMC. Both results are established using the same procedure, which we present first. We assume smooth $\eta$ in all that follows.

\begin{assume}\label{a:holder}
    $\eta(x)$ is $(\lambda, \as)$-H\"{o}lder for some known $\lambda > 0$, and some unknown $\as \in (0,1]$.
\end{assume}

As in prior work \cite{Minsker12, Locatelli-et-al17}, we assume access to $\lambda$ or any upper-bound thereof. 


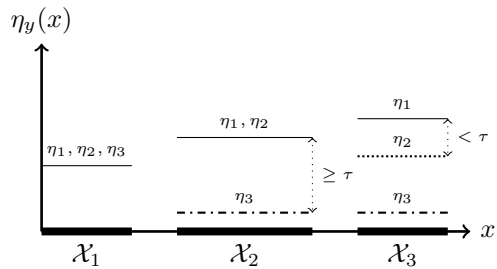
\begin{figure}
	\centering
		\begin{tikzpicture}[x=0.6 cm, y=0.25 cm]{left}
		
		  \draw[line width=3pt]  (0,0)--(2,0) node[midway, below]{$\mathcal{X}_1$};
		  \draw[line width=3pt]  (3,0)--(6,0) node[midway, below]{$\mathcal{X}_2$};
		  \draw[line width=3pt]  (7,0)--(9,0) 
		  node[midway, below]{$\mathcal{X}_3$};
	        
		  \draw(0,10)node[above]{$\eta_y(x)$};
		
		\draw [->,line width=1pt] (0,0) --(0,10)node[right] {};
		\draw [->,line width=1pt] (0,0) --(9.5,0)node[right] {$x$};
		
		\draw(3,5)--(6,5) node[midway,above]{\tiny$ \eta_1,\eta_2$};
		\draw[thick, dash dot](3,1)--(6,1) node[midway,above]{\tiny $\eta_3$};
		
        \draw[<->, dotted](6,5) -- (6,1) node[midway, right]{\tiny $ \ge \tau$};
        
		\draw(7,6)--(9,6) node[midway,above]{\tiny $\eta_1$};
		\draw[thick,densely dotted](7,4)--(9,4) node[midway,above]{\tiny $\eta_2$};
		\draw[thick, dash dot](7,1)--(9,1) node[midway,above]{\tiny $\eta_3$};

		\draw(0,3.5)--(2,3.5) node[midway,above]{\tiny $\eta_1, \eta_2, \eta_3$};
		
        \draw[<->, dotted](9,6) -- (9,4) node[midway, right]{\tiny $ < \tau$};
		\end{tikzpicture}
		\caption{Different types of margin over space.}
		\label{fig:margin}
\end{figure}


    




\subsubsection{An Adaptive Procedure}
The detailed approach is presented in Algorithm~\ref{alg:meta}, and follows an adaptation strategy of \cite{Locatelli-et-al17, locatelli2018adaptive} for unknown smoothness $\alpha$. This procedure repeatedly calls a non-adaptive subroutine, Algorithm~\ref{alg:main}, for a sequence of increasing values of $\alpha$, i.e. $\{\alpha_i\}_{i=1}^{\lfloor\log (n)\rfloor^3}$ with $\alpha_i = i/\lfloor\log (n)\rfloor^3$. 

In a departure from the binary case ($L = 2$) studied in prior work, both procedures operate by maintaining a set of candidate labels via local elimination (requiring new book-keeping), and remaining labels are then aggregated at the end to return a final classifier.


\begin{algorithm}[t]
\caption{Meta Algorithm}
\begin{algorithmic}[1]
\State Input: $n,\delta,\lambda$ 
\State Initialization: \\
$\bullet$ Set $\as_0 = 0$, $n_{0}=\frac{n}{\lfloor\log (n)\rfloor^{3}}$, $\delta_{0}=\frac{\delta}{\lfloor\log (n)\rfloor^{3}}$ \\
$\bullet$ Set minimum level $r_0 = 2^{\lfloor\log_2(n_0^{-1/d})\rfloor}$\\
$\bullet$ Set final candidate labels $\bcl_\cC = [L], \forall \cC\in \mathscrbf{C}_{r_0}$\\

\For{$i=1,...,\lfloor\log(n)\rfloor^3$}
    \State{{\color{gray} \it // Run the non-adaptive subroutine}}
    \State Set $\alpha_{i}=\frac{i}{\lfloor\log (n)\rfloor^{3}}$
    \State Run Algorithm~\ref{alg:main} with $\left(n_{0}, \delta_{0}, \alpha_{i}, \lambda,r_0\right)$ \\
    \hskip\algorithmicindent \hskip\algorithmicindent to obtain candidate labels $\{\bcl^{\as_i}_\cC\}_{\cC\in\mathscrbf{C}_{r_0}}$
    \State{ {\color{gray} // \textit{Aggregate candidate labels}}}
    \If {$\forall \cC \in \mathscrbf{C}_{r_0}, \bcl_\cC\cap \bcl_\cC^{\as_i} \neq \emptyset$}
        \State $\forall\ \cC \in \mathscrbf{C}_{r_0}$, set $\bcl_\cC =\bcl_\cC \cap \bcl_\cC^{\as_i}$
    \EndIf
\EndFor 
\State Output: $\hat h_n(x)= \min \bcl_\cC$ for $x \in \cC \in \mathscrbf{C}_{r_0}$
\end{algorithmic}
\label{alg:meta}
\end{algorithm}

Next, we discuss the non-adaptive subroutine, Algorithm~\ref{alg:main}, that assumes a known $\as$. It operates top down on dyadic partitions $\Cr$, $r=1/2 \to 0$, and aims to quickly detect cells $\cC \in \Cr$ with large sharp margin and stops sampling there; all cells with at least two remaining candidate labels are deemed \emph{active}, and form a set $\cA_r \subset \Cr$ of cells which are then refined. 

The budget is tracked throughout, by sampling as little as $n_{r,\alpha}$ points in each $\cC\in \Cr$, for
\begin{align}
\label{eqn:n_r}
n_{r,\alpha} \doteq \left. {2\log\left(\frac{2L}{\delta_0 r ^ {d+1}}\right)} \right/ (\lambda r^\alpha)^2. 
\end{align}
This sample is used to estimate $\eta$ in each cell $\cC$ as
\begin{align}\label{eqn:loss_estimator}\hat\eta_y(\cC)=n_{r,\alpha}^{-1} \sum_{i=1}^{n_{r,\alpha}} \id (Y^{\cC}_i = y),\end{align}
and eliminate labels $y$ whenever ${\hat \eta_{(1)}(\cC)}- \hat\eta_y(\cC) \ge \tau_{r,\alpha}$, where we define 
\begin{align}
\label{eqn:tau}
\hat \eta_{(1)}(\cC) \doteq \max_{y}\hat\eta_{y}(\cC), \quad  \text{ and } \tau_{r,\alpha} \doteq 6 \lambda r^\alpha. 
\end{align}

\subsubsection{Rates Under Strong Density Condition.}
We start with the following definition.  

\begin{definition}
    The {\bf sharp margin} on $\eta$ is defined as $\mh(x) \doteq \eta_{(1)}(x) - \eta_{(2)}(x)$, where we have $\eta_{(1)} = \eta_{(2)}$ when the Bayes label is not unique at $x$. 
\end{definition}

\begin{assume} \label{asmp:general_tsy}
		$P_{X, Y}$ satisfies a {\bf refined margin condition} (RMC) with 
		$\ee, C_\beta, \beta, \beta' > 0$ with $\beta' \geq \beta$: 
		\begin{align*}
			\forall \tau > 0, \quad P_X\left(\{x: \ms(x) \leq \tau \}\right)  & \le C_{\beta} \tau^{\beta};  \text{ and }  \\
			\forall \tau > 0, \quad P_X \left(\left\{x: \ms'(x) \leq \tau \right\}\right) &\le \ee + C_\beta \tau ^{\beta'}.
		\end{align*} 
\end{assume}

\begin{remark}
    The two conditions in Assumption~\ref{asmp:general_tsy} differ when the Bayes is not unique, i.e., when $\p (\ms' = 0) \doteq \ee > 0$, otherwise $\ms = \ms'$ a.e., and we may choose $\beta = \beta'$. 
    For illustration, consider the example of Figure~\ref{fig:margin} with $L =3$. We have $\{x:\ms(x) \le \tau\} = \mathcal{X}_3$, while $\{x:\mh(x) \le \tau\} = \cup_{i=1}^3 \mathcal{X}_i$. In particular, $\ee = P_X(\mathcal{X}_1 \cup \mathcal{X}_2)$, as $\mh = 0$ on $\mathcal{X}_1 \cup \mathcal{X}_2$.
\end{remark}

\begin{algorithm}[t]
\caption{Non-adaptive Algorithm}
\label{alg:main}
\begin{algorithmic}[1]
\State Input: $n_0,\delta_0,\alpha,\lambda,r_0$
\State Initialization: \\
$\bullet$ Initial level: $r=1/2$ \\
$\bullet$ Active cells: $\mathcal{A}_{r}= \mathscrbf{C}_{r}$ \\
$\bullet$ Budget up to level $r$: $m_r=|\cA_r| n_{r,\alpha}$ (see \eqref{eqn:n_r})\\
$\bullet$ Candidate labels: $\bcl^\as_{\cC}=[L], \forall \ \cC\in \mathscrbf{C}_{r}$

\While{$(m_r \leq n_0)$\text{ and }$(|\cA_r| > 0)$} 
    \State{{\color{gray}// \it Eliminate bad labels}}
    \For{each $\cC\in\mathcal{A}_r$}
        \State Samples $({X}^{\cC}_{i}, {Y}^{\cC}_{i})_{j\leq n_{r,\alpha}}$ in cell $\cC$
        \State Compute $\{\hat\eta_y(\cC)\}_{y \in [L]}$ by \eqref{eqn:loss_estimator}
        \State Set
        $\bcl^\as_{\cC}=\bcl^\as_{\cC}\setminus\{y:\hat\eta_{(1)}(\cC) - \hat\eta_{y}(\cC) \geq \tau_{r,\alpha}\}$\eqref{eqn:tau}
    \EndFor
    \State{{\color{gray}// \it Pass information to the next level}}
    \State $\forall \cC' \in \mathscrbf{C}_{r/2}$ with $\cC' \subset \cC$, set $\bcl^\as_{\cC'}=\bcl^\as_{\cC}$
    \State Set $\mathcal{A}_{r/2} = \cup \{\cC'\in \mathscrbf{C}_{r/2}:\cC'\subset \cC$ for some \\ 
    \hskip\algorithmicindent \hskip\algorithmicindent \hskip\algorithmicindent \hskip\algorithmicindent
    \hskip\algorithmicindent \hskip\algorithmicindent$\cC \in \cA_r$ with $|\bcl^\as_\cC| \ge 2$\}
    \State Set $r=r/2$ {\color{gray} \it // Go to next level}
    \State Set $m_{r/2}=m_r+|\mathcal{A}_r|n_{r,\alpha}$ {\color{gray}// \it Update the budget used}
\EndWhile  
\State Set $r_\min=2r$ {\color{gray} \it // The minimum level reached }
\State Set $\bcl^\as_\cC=\bcl^\as_{\cC'},\forall \ \cC \in \mathscrbf{C}_{r_0}$ with $\cC \subset \cC'\in \mathscrbf{C}_{r_{\min}}$
\State Output: $\{\bcl^\as_\cC\}_{\cC \in \mathscrbf{C}_{r_0}}$

\end{algorithmic}
\end{algorithm}
\FloatBarrier


The upper-bound shown in Theorem \ref{thm:aggregation_strong_density} below depends on $\ee$, and recovers existing bounds (for the binary case) when $\ee = 0$, namely $\widetilde{O}\left(n^{-\as(\bs'+1)/(2\as+d-\as\bs')}\right)$ as shown e.g. in \cite{Minsker12, Locatelli-et-al17} under sharp margin. 
This is an improvement over the passive learners, and matches the active lower-bound in \cite{Minsker12} under strong density condition with $\as\bs \le d$. For large $\ee > 0$, the first term $\widetilde{O}\left(n^{-\as(\bs+1)/(2\as+d)}\right)$ dominates, matching our lower-bound of Theorem \ref{thm:lower}. 

\begin{theorem} \label{thm:aggregation_strong_density} 
Let $n \in \N$ and { $\as \in (0,1]$}
and $\alpha\beta' \le d$.  
Let $\hat{h}_{n}$ denote the classifier returned by Algorithm~\ref{alg:meta} with input $n$, $\lambda$ and $0 < \delta < 1$. Under Assumption~\ref{a:holder}~and~\ref{asmp:general_tsy}, and assume further that strong density condition holds for some $c_d>0$, then with probability at least $1-\delta$,
\begin{align*}
 \mathcal{E} \left(\hat{h}_{n}\right)
\leq \ & C_2 \left(
\ee^{\frac{\alpha(\beta+1)}{2 \alpha+d}}  
\left(\frac{\lambda^{\frac{d}{\alpha}}\log^3(n) \log \left(\frac{4L  \lambda^{2} n}{\delta}\right)}{n}\right)^{\frac{\alpha(\beta+1)}{2 \alpha+d}}\right. \\ &+
\left.\left(\frac{\lambda^{\frac{d}{\alpha}\vee\beta'}\log^3(n) \log \left(\frac{4L  \lambda^{2} n}{\delta}\right)}{n}\right)^{\frac{\alpha(\beta'+1)}{2 \alpha+d-\alpha\beta'}} 
\right)
\end{align*}
for some constant $C_2> 0$ independent of $n,\delta,\lambda, L, \ee$.
\end{theorem}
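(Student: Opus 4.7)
The plan is to first analyze Algorithm~\ref{alg:main} run with the true smoothness $\alpha$, derive a rate in terms of the deepest level $r_{\min}$ reached, and then transfer to Algorithm~\ref{alg:meta} via the Lepski-type aggregation argument of \cite{Locatelli-et-al17}. I begin by establishing a high-probability ``correctness event'' $E$: by Hoeffding and a union bound over all $r \in \{2^{-k}\}$ with $r \geq r_0$, $\cC \in \Cr$, and $y \in [L]$, $|\hat\eta_y(\cC)-\bar\eta_y(\cC)| \leq \lambda r^\alpha$, where $\bar\eta_y(\cC)$ is the $P_X$-average of $\eta_y$ on $\cC$; combined with H\"older this gives $|\hat\eta_y(\cC)-\eta_y(x)|\leq 2\lambda r^\alpha$ for every $x \in \cC$. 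The choice $\tau_{r,\alpha}=6\lambda r^\alpha$ then yields on $E$: (i) any eliminated $y$ has $\eta_{(1)}(x)-\eta_y(x)\geq 2\lambda r^\alpha$ on $\cC$, so no Bayes label is ever eliminated; (ii) if $\cC$ is still active at level $r$ then $\mh(x)\leq c_1\lambda r^\alpha$ on $\cC$; (iii) if $\cC$ becomes inactive at $r$ with survivor $y^{\rm surv}$, then at any $x$ where $y^{\rm surv}$ is non-Bayes, $\ms(x)\leq c_2\lambda r^\alpha$.

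Property (ii) together with strong density and Assumption~\ref{asmp:general_tsy} gives $|\cA_r|\,c_d r^d \leq P_X(\mh\leq c_1\lambda r^\alpha)\leq \ee+C_\beta(c_1\lambda r^\alpha)^{\beta'}$. Since the per-level cost $|\cA_r|n_{r,\alpha}$ is decreasing in $r$, the dyadic sum is dominated by the deepest level, and the budget constraint $\leq n_0$ yields
\[
r_{\min}^\alpha \ \lesssim\ \max\!\Bigl((\tilde c\,\ee/n)^{\frac{\alpha}{d+2\alpha}},\ (\tilde c/n)^{\frac{\alpha}{d+2\alpha-\alpha\beta'}}\Bigr),
\]
with $\tilde c$ absorbing the $\lambda^{d/\alpha\vee\beta'}\log^3 n\,\log(4L\lambda^2 n/\delta)$ factors from $n_{r,\alpha}$ and from the reduced per-$\alpha_i$ budget $n_0=n/\lfloor\log n\rfloor^3$. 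For the excess risk I split $\cX$ into the unique-Bayes region $\{\mh>0\}$ and the non-unique region $\{\mh=0\}$. In the former, (iii) forces every inactive cell to contribute zero excess, so only cells still active at $r_{\min}$ contribute, with per-point excess $\leq c_1\lambda r_{\min}^\alpha$ over a mass $\leq C_\beta r_{\min}^{\alpha\beta'}$ (the RMC bound on $\mh$, restricted to $\{\mh>0\}$, drops the additive $\ee$), for a total $\lesssim r_{\min}^{\alpha(\beta'+1)}$. In $\{\mh=0\}$ cells are always active; per-point excess is again $\leq c_1\lambda r_{\min}^\alpha$, and a contributing point satisfies $\ms(x)\leq c_2\lambda r_{\min}^\alpha$, bounding the contributing mass by $\min(\ee,C_\beta r_{\min}^{\alpha\beta})$ and its contribution by $r_{\min}^\alpha\min(\ee,r_{\min}^{\alpha\beta})$.

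Thus $\cE(\hat h_n)\lesssim r_{\min}^{\alpha(\beta'+1)} + r_{\min}^\alpha\min(\ee,r_{\min}^{\alpha\beta})$. A short case split converts this into the theorem: when the first budget branch dominates, bounding $\min\leq r_{\min}^{\alpha\beta}$ and substituting gives $(\ee/n)^{\alpha(\beta+1)/(d+2\alpha)}$ up to $\tilde c$-factors, i.e., the first theorem term; when the second branch dominates, the relation $\ee \lesssim n^{-\alpha\beta'/(d+2\alpha-\alpha\beta')}$ forced by that branch being active implies $\ee\, r_{\min}^\alpha \lesssim r_{\min}^{\alpha(\beta'+1)}$, absorbing the $\ee$-part into the second theorem term. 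For Algorithm~\ref{alg:meta}, I use that for every $\alpha_i\leq\alpha$ and $r\leq 1/2$ the same correctness analysis as above (with the $\alpha_i$-concentration event in place of $E$, joined by union bound) shows that the $\alpha_i$-subroutine preserves every Bayes label, so each intersection $\bcl_\cC\cap\bcl^{\alpha_i}_\cC$ is non-empty and the aggregate $\bcl_\cC$ retains at least one Bayes label. Some $\alpha_i\in(\alpha-\lfloor\log n\rfloor^{-3},\alpha]$ achieves the rate derived above (the $\log^3 n$ inflation is already inside $\tilde c$ and $r_{\min}^{\alpha_i}/r_{\min}^{\alpha}=1+o(1)$ at the relevant scales); since $\bcl_\cC\subseteq\bcl^{\alpha_i}_\cC$, the final classifier $\min\bcl_\cC$ inherits the rate.

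The main technical obstacle is orchestrating the union bound so that the correctness event holds simultaneously across the $\lfloor\log n\rfloor^3$ candidate values $\alpha_i$, all dyadic levels down to $r_0\approx n^{-1/d}$, all $r^{-d}$ cells per level, and all $L$ labels---this is precisely what produces the $\log(4L\lambda^2 n/\delta)$ factor inside the bound---while keeping $n_{r,\alpha}$ small enough that the dyadic budget analysis closes. A secondary subtlety, not present in the binary subroutines of prior work, is ensuring the multi-class candidate-set book-keeping of Algorithms~\ref{alg:main}~and~\ref{alg:meta} preserves at least one Bayes label per cell through every intersection; the case split converting $r_{\min}^\alpha\min(\ee,r_{\min}^{\alpha\beta})$ into the two clean theorem terms is then a self-contained calculation.
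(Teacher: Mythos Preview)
Your proposal is correct and follows the paper's overall architecture: the same high-probability event via Hoeffding plus union bound, the same ``Bayes labels never eliminated / bad labels eliminated past $\Delta_r$'' correctness lemmas, the same budget analysis using strong density to bound $|\cA_r|$ by $(\ee + C_\beta(\lambda r^\alpha)^{\beta'})/(c_d r^d)$ and hence obtain the two-branch bound on $r_{\min}$, and the same Lepski-type aggregation over the grid $\{\alpha_i\}$.

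The one genuine difference is how you pass from $r_{\min}$ to the excess risk. The paper argues more directly: on the good event every surviving label $y\in S_\cC$ satisfies $\eta_{(1)}(x)-\eta_y(x)<10\lambda r_{\min}^\alpha$, and moreover $S_\cC$ contains only Bayes labels wherever $\ms(x)>10\lambda r_{\min}^\alpha$; hence $\cE(\hat h)\le P_X(\ms\le \Delta_{r_{\min}})\cdot \Delta_{r_{\min}}\le C_\beta \Delta_{r_{\min}}^{\beta+1}$, using only the first line of Assumption~\ref{asmp:general_tsy}. The case split on $Q_1$ versus $Q_2$ then immediately yields the two theorem terms. You instead split $\cX$ into $\{\mh>0\}$ and $\{\mh=0\}$, obtain the intermediate bound $r_{\min}^{\alpha(\beta'+1)}+r_{\min}^\alpha\min(\ee,r_{\min}^{\alpha\beta})$, and do a second case analysis. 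This works (your ``drop the $\ee$'' step uses $\ee=P_X(\mh=0)$, consistent with the paper's remark), and even gives a marginally sharper intermediate expression, but it is more labor for the same endpoint. Note also that your property~(iii) is vacuous: once only one label survives in a cell, property~(i) already forces it to be Bayes at every point, so inactive cells contribute zero excess everywhere, not just on $\{\mh>0\}$. The paper's route via $\ms$ alone avoids that detour.
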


\begin{remark}
{
The bound is trivial for $\alpha< \frac{1}{\log (n)}$, since $n^{-\alpha}\ge n^{-1/\log (n)}= \frac{1}{e}$. Thus, we only need to show for $\alpha\ge \frac{1}{\log (n)}$.}
\end{remark}

A main novelty in the analysis is to separately consider parts of space with unique Bayes, determined by $\ee$ and $\beta'$, and those parts of space where the Bayes might not be unique, but which still have margin, determined by $\beta$. Furthermore, our consideration of general multiclass, together with non-unique Bayes, brings in a bit of added technicality due largely to additional book-keeping. In particular, while in \cite{Minsker12, Locatelli-et-al17}, the main correctness argument involved showing that all labeled parts of space (i.e. cells with a single label left) have 0 excess error w.h.p., we additionally have to show that in fact, remaining labels in most active cells are close in error to Bayes. 

\subsubsection{Rates for General Densities}

For general $P_X$, on the other hand, Algorithm~\ref{alg:main} has an excess risk rate of order $\widetilde{O}(n^{-(\as(\bs+1))/(2\as+d)})$, which is always faster than the lower minimax rate ${O}(n^{-(\as(\bs+1))/(2\as+d+\as\bs)})$ for passive learning of \cite{audibert2007fast} under the same conditions. 

{In other words, under TMC, which allows non-unique Bayes classifiers, active learning guarantees savings over the worst-case rate of passive learning, given the ability to evenly sample the decision boundary. }

\begin{theorem}\label{thm:aggregation} 
Let $n \in \N$ and $\as \in (0,1]$
and $\alpha\beta' \le d$.  
Let $\hat{h}_{n}$ denote the classifier returned by Algorithm~\ref{alg:meta} with input $n$, $\lambda$ and $0 < \delta < 1$. Under Assumption~\ref{a:holder}~and~\ref{asmp:general_tsy}, with probability at least $1-\delta$,
\begin{align*}
    \mathcal{E} \left(\hat{h}_{n}\right)
    \leq& \ C_3\left(\frac{\log^3(n) \lambda^{\frac{d}{\alpha}} \log \left(\frac{4L  \lambda^{2} n}{\delta}\right)}{ n}\right)^{\frac{\alpha(\beta+1)}{2 \alpha+d}}
\end{align*}

for some constant $C_3 > 0$ that does not depend on $n,\delta,\lambda, L, \ee$.
\end{theorem}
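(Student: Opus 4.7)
The plan is to first analyze the non-adaptive subroutine Algorithm~\ref{alg:main} with a known $\alpha$, and then lift to Algorithm~\ref{alg:meta} via the standard $\alpha$-aggregation argument of \cite{Locatelli-et-al17}. The starting point is a \emph{good event}: applying Hoeffding's inequality to each empirical cell mean $\hat\eta_y(\cC)$ with the sample size $n_{r,\alpha}$ in~\eqref{eqn:n_r}, and union-bounding over labels $y\in[L]$, cells $\cC$, and dyadic levels $r$, yields with probability at least $1-\delta_0$ that $|\hat\eta_y(\cC)-\eta_y(x)|\le 2\lambda r^\alpha$ uniformly in $y$, $x\in\cC$, and levels processed. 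On this event, the elimination threshold $\tau_{r,\alpha}=6\lambda r^\alpha$ is calibrated so that (i) \textbf{any} Bayes label at \textbf{any} point of $\cC$ is never dropped from $\bcl^\alpha_\cC$---so once $|\bcl^\alpha_\cC|$ collapses to one label (the cell is ``deactivated''), that single label is the unique Bayes on all of $\cC$ and contributes zero excess risk; and (ii) every surviving $y\in\bcl^\alpha_\cC$ obeys $\eta_{(1)}(x)-\eta_y(x)\le 10\lambda r^\alpha$ for every $x\in\cC$.

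The central idea---and the step I expect to be the main obstacle---is to leverage (i) and (ii) to bound excess risk using only the \emph{soft} margin, so as to avoid the $\ee$-dependent term appearing in Theorem~\ref{thm:aggregation_strong_density}. By (i) the excess risk is supported only on cells still active at the terminal level $r_{\min}$. On such a cell $\cC$, if the returned label $\min\bcl^\alpha_\cC$ is non-Bayes at some $x\in\cC$, then (ii) forces $\ms(x)\le \eta_{(1)}(x)-\eta_{\min\bcl}(x)\le 10\lambda r_{\min}^\alpha$. That is, $\{x:\text{excess}(x)>0\}\subset\{\ms\le 10\lambda r_{\min}^\alpha\}$, and Tsybakov's condition of Assumption~\ref{asmp:general_tsy} yields
\begin{align*}
\mathcal{E}(\hat h_n)\le 10\lambda r_{\min}^\alpha\cdot P_X\bigl(\ms\le 10\lambda r_{\min}^\alpha\bigr)\le C\bigl(\lambda r_{\min}^\alpha\bigr)^{\beta+1}.
\end{align*}
This is what kills the non-unique-Bayes contribution even when $\ee$ is large: on non-unique-Bayes points, the min-label output is wrong only inside the small-soft-margin region, which already carries a $\beta$-power bound.

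It then remains to lower-bound $r_{\min}$. Without strong density I would simply use the worst-case count $|\cA_r|\le r^{-d}$; the budget constraint $\sum_r|\cA_r|n_{r,\alpha}\le n_0$ is dominated by the smallest level and yields $r_{\min}^\alpha\gtrsim(\lambda^{d/\alpha}\log/n_0)^{\alpha/(2\alpha+d)}$ up to the $\log$ and $\lambda$ factors present in~\eqref{eqn:n_r}. Substituting into the previous display produces the asserted rate (with $n_0$ in place of $n$). Finally, to extend to Algorithm~\ref{alg:meta}, I would observe that for each grid value $\alpha_i\le\alpha$ the $(\lambda,\alpha)$-H\"older assumption is inherited at scale $\alpha_i$, so on the good event $\bcl^{\alpha_i}_\cC$ retains a Bayes label; the conditional intersection rule then preserves this property for the aggregate $\bcl_\cC$, which therefore inherits the rate of the tightest valid $\alpha_i\in[\alpha-\lfloor\log n\rfloor^{-3},\alpha]$. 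The loss is absorbed by the additional $\log^3 n$ factor and by replacing $n$ with $n_0=n/\lfloor\log n\rfloor^3$, giving exactly the form stated.
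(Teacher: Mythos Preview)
Your proposal is correct and mirrors the paper's proof almost exactly: the paper establishes the same good event via Hoeffding plus union bound (Lemma~A.1), the same ``Bayes labels are never eliminated'' and ``surviving labels are within $10\lambda r^\alpha$ of $\eta_{(1)}$'' facts (Lemmas~A.2--A.3), bounds $r_{\min}$ via the crude count $|\cA_r|\le r^{-d}$ (Lemma~A.4(i)), and then aggregates over $\alpha_i\le\alpha$ exactly as you describe. One cosmetic slip: you write ``lower-bound $r_{\min}$'' and $r_{\min}^\alpha\gtrsim(\cdot)$, but you need the \emph{upper} bound on $r_{\min}$ (the budget argument gives $r_{\min}\lesssim(\cdot)^{1/(2\alpha+d)}$), which is what you then actually use.
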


The proof ideas follow similar outlines as for Theorem \ref{thm:aggregation_strong_density}, though more direct.

\section{ANALYSIS}\label{sec:analysis}

\subsection{Proof of Theorem~\ref{thm:lower}}\label{sec:lowerboundanalysis}


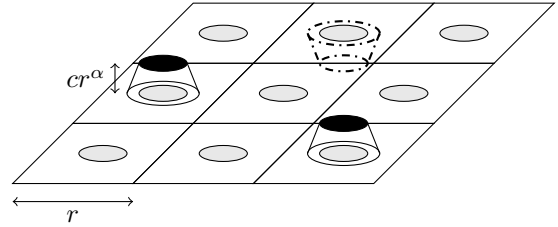
\begin{figure}[htbp]
	\centering
	\begin{tikzpicture}[x=0.8cm , y=0.8cm ]{left}
\def \nx{2}
\def \ny{2}
\foreach \x in {0,...,\nx} {
	\foreach \y in {0,...,\ny} {
	\draw(\x*2+\y,\y)--(\x*2+\y+2,\y)--(\x*2+\y+3,1+\y)--(\x*2+\y+1,\y+1)--cycle;
	\draw[fill={rgb:black,1;white,10}](\x*2+\y+1.5,\y+0.5) ellipse (0.4 and 0.133);
	}
}

\def \x{0}
\def \y{0}
\draw[<->](\x*2+\y,\y-0.3)--(\x*2+\y+2,\y-0.3) node[midway, below]{$ r$};

\def \x{2}
\def \y{0}
\def \z{1}
\draw(\x*2+\y+1.5,\y+0.5) ellipse (0.6 and 0.2);
\draw[fill](\x*2+\y+1.5,\y+0.5+0.5*\z) ellipse (0.4 and 0.133);
 \draw(\x*2+\y+1.5-0.6,\y+0.5)--(\x*2+\y+1.5-0.4,\y+0.5+0.5*\z);
 \draw(\x*2+\y+1.5+0.6,\y+0.5)--(\x*2+\y+1.5+0.4,\y+0.5+0.5*\z);

\def \x{0}
\def \y{1}
\def \z{1}
\draw(\x*2+\y+1.5,\y+0.5) ellipse (0.6 and 0.2);
\draw[fill](\x*2+\y+1.5,\y+0.5+0.5*\z) ellipse (0.4 and 0.133);
 \draw(\x*2+\y+1.5-0.6,\y+0.5)--(\x*2+\y+1.5-0.4,\y+0.5+0.5*\z);
 \draw(\x*2+\y+1.5+0.6,\y+0.5)--(\x*2+\y+1.5+0.4,\y+0.5+0.5*\z);
  \draw[<->](\x*2+\y+1.5-0.6-0.2,\y+0.5)--(\x*2+\y+1.5-0.6-0.2,\y+0.5+0.5)node[midway, left]{$c r^\alpha$};

\def \x{1}
\def \y{2}
\def \z{-1}
\draw[thick, dash dot](\x*2+\y+1.5,\y+0.5) ellipse (0.6 and 0.2);
\draw[thick, dash dot](\x*2+\y+1.5,\y+0.5+0.5*\z) ellipse (0.4 and 0.133);
 \draw[thick, dash dot](\x*2+\y+1.5-0.6,\y+0.5)--(\x*2+\y+1.5-0.4,\y+0.5+0.5*\z);
 \draw[thick, dash dot](\x*2+\y+1.5+0.6,\y+0.5)--(\x*2+\y+1.5+0.4,\y+0.5+0.5*\z);

	\end{tikzpicture}
	\caption{Construction for Theorem \ref{thm:lower} onto a partition $\Cr$ of $[0, 1]^d$, for a critical $r = r(n, \alpha, \beta, \lambda)$. Two \emph{coins} are thrown in each cell $\cC$, one $z_\cC$ with some bias determining whether the Bayes is unique, the other $\sigma_\cC$ determining the Bayes label. The regression function is constructed as $\eta_C \approx 1/2 \pm r^\alpha$, and together with $P_X$ forces any $\hat h$ to mostly rely on local information. 
	}
	\label{fig:construction}
\end{figure}


\subsubsection{Construction of Joint Distributions}\label{sec:construction_distribution}
		We again operate over a dyadic partition $\Cr$ of the unit cube $[0,1]^d$. Let $r = c_1 n^{-\frac{1}{2\as+d}}$, where $c_1 = \frac{64}{\lambda^{2}}$. Without loss of generality, we assume that $-\log_2 r \in \N$. Furthermore, we denote the barycenter of any $\cC \in \mathscrbf{C}_r$ as $x_\cC$. 
		The marginal distribution $P_X$ has the density with respect to the Lebesgues measure:
			\begin{align*}
				f(x) \doteq
				\begin{cases}
					4^d & \text{ if }\lVert x - x_\cC \rVert < r/8 \text{ for some } \cC \in \Cr;\\
					0 & \text{ otherwise.}
				\end{cases}
			\end{align*}
		where $\lVert \cdot \rVert$ is the supnorm. Let $\z= (z_\cC)_{\cC \in \Cr}\in \{0,1\}^{|\Cr|}$ and  $\sg = (\sigma_\cC)_{\cC \in \Cr} \in \{\pm 1\}^{|\Cr|}$. Define:
		$$\eta_{\z, \sg}(x) \doteq 1/2 + c_\eta \sum_{\cC \in \Cr} z_\cC\cdot  \sigma_\cC \cdot {\phi_\cC(x)},$$
		where $c_\eta = \lambda / 8$, and $$\phi_\cC(x) = \min\left\{(2r^\as - 8r^{\as-1}\lVert x - x_\cC \rVert)_+, r^\as\right\}.$$ For each pair $(\z, \sg)$, one can define a joint probability distribution $P_{\z,\sg}$ characterized by $P_X$ and $\E[Y|X=x] = \eta_{\z, \sg}(x)$. See Figure~\ref{fig:construction} for an example of $P_{\z,\sg}$ for $d = 2$ and $r = 3$. In particular, $P_X$ is uniformly distributed within its support, which is the area shaded in gray. In a cell $\cC \in \Cr$ where $z_\cC = 1$, we have a small bump in regression function, of which the direction is determined by $\sigma_\cC$. By construction, $\eta_{\z, \sg}$ is always a constant in the intersection of $\cC$ and the support of $P_X$, with the only possible values being $1/2$ and $1/2 \pm c_\eta r^{\as}$.
	\begin{remark}
	    
	    Our construction in fact satisfies the strong density assumption of \cite{audibert2007fast}: their assumption requires lower-bounded densities only on the distribution support which is allowed to be disconnected, as constructed here. 
	\end{remark}
\subsubsection{Establishing the Lower-bound}
		
		The proof of the Theorem~\ref{thm:lower} is divided and conquered by Proposition \ref{prop:subclass} to \ref{prop:NP}.
		Let $\Sigma \doteq \{P_{\z,\sg}: (\z,\sg) \in \{0,1\}^{|\Cr|} \times \{\pm 1\}^{|\Cr|}\}$ and $\Sigma_{\bs} \doteq \{P_{\z,\sg}: (\z,\sg) \in \Theta_\bs\}$ where $ \Theta_\bs \doteq \{(\z,\sg): \forall \tau > 0, P_X(\{x: 0 < |2\eta_{\z, \sg}(x) - 1| \le \tau\})\le C_\beta\tau^{\bs}\}$.   
		
		\begin{proposition}\label{prop:subclass}
		$\Sigma_{\bs} \subset \cP(\Xi)$. Consequently,
		     $$\inf_{\hat{h}} \sup_{P_{X, Y}\in \cP(\Xi)} \E \ \cE(\hat{h}_n) \ge \inf_{\hat{h}} \sup_{P_{X, Y}\in \Sigma_{\bs}} \E \ \cE(\hat{h}_n).$$
	    where the infimum is taken over all active learners.
		\begin{proof} 
		    Let $P_{\z,\sg} \in \Sigma_\bs$. The TMC is satisfied by construction, and it is trivial to show that strong density condition holds for $c_d = 1$. It is left to show that $\eta_{\z,\sg}$ is $(\lambda, \as)$-H\"{o}lder. In fact, this hold for all $P_{\z,\sg} \in \Sigma$. 
		    
		    Let $x, x' \in \cX$. If they are in a common cell $\cC$, then 
            \begin{align*}
                 |\eta_{\z,\sg}(x) -  \eta_{\z,\sg}(x')| & \le z_\cC c_\eta (8r^{\as-1} \lVert x - x'\rVert) \\
                 & \le \lambda \lVert x -x'\rVert^\as,
            \end{align*}
            where the last inequality is due to the fact $r / \lVert x - x'\rVert \ge 1$ and $\as - 1 < 0$. If they are in different cells, $|\eta_{\z,\sg}(x) -  \eta_{\z,\sg}(x')| = 0$ if $\lVert x - x'\rVert < r/4$. Therefore,
            \begin{align*}
                  |\eta_{\z,\sg}(x) -  \eta_{\z,\sg}(x')| 
                 \le 2c_\eta r^\as 
                 \le  \lambda \lVert x- x' \rVert^\as.
            \end{align*}
		    Therefore, $\eta_{\z,\sg}$ is $(\lambda,\as)-$H\"{o}lder.
		\end{proof}
		\end{proposition}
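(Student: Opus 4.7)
The plan is to verify that every distribution $P_{\z,\sg} \in \Sigma_\bs$ lies in $\cP(\Xi)$, after which the displayed inequality is immediate since restricting the inner supremum to a subclass can only decrease it, and taking infimum over $\hat h$ preserves the ordering. So the task reduces to checking the three defining properties of $\cP(\Xi)$ for a generic $P_{\z,\sg} \in \Sigma_\bs$.

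The TMC holds on $\Sigma_\bs$ essentially by the definition of $\Theta_\bs$: since $L = 2$ and $\ms(x) = \infty$ wherever $\eta_1(x) = \eta_2(x)$, the set $\{x : \ms(x) \leq \tau\}$ equals $\{x : 0 < |2\eta_{\z,\sg}(x) - 1| \leq \tau\}$, and $\Theta_\bs$ bounds the latter by $C_\bs \tau^\bs$. For strong density, $P_X$ is uniform with height $4^d$ on disjoint sup-norm balls of radius $r/8$ centered at the barycenters of $\Cr$; each such ball carries mass $4^d \cdot (r/4)^d = r^d$. I would verify the condition by case analysis on the level $r' = 2^{-k}$: for $r' \geq r$, a nonempty dyadic cell of side $r'$ contains $(r'/r)^d$ whole support balls of total mass $(r')^d$; for finer levels, dyadic alignment of the ball boundaries (which sit at multiples of $r/8$) either places the ball entirely inside a single cell, or splits it symmetrically into $2^d$ equal pieces across a cell corner, and a short computation confirms $P_X(\cC') \geq (r')^d$ whenever nonzero, yielding $c_d = 1$.

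The Hölder check is the main content. Fixing $x, x' \in \cX$, I split by whether they share a cell. If $x, x' \in \cC$, only $\phi_\cC$ contributes, and since $\phi_\cC$ is the min of an $8r^{\as-1}$-Lipschitz tent and a constant, it is itself $8r^{\as-1}$-Lipschitz, so that with $\|x - x'\| \leq r$ and $\as - 1 \leq 0$ the elementary bound $r^{\as-1}\|x - x'\| \leq \|x - x'\|^{\as}$ yields the required $\lambda \|x - x'\|^\as$ with $\lambda = 8 c_\eta$. For $x, x'$ in different cells, the decisive geometric observation is that if $\phi_\cC(y) > 0$ for $y \in \cC$ then $\|y - x_\cC\| < r/4$, so $y$ sits at sup-norm distance at least $r/4$ from any adjacent cell; therefore if $\|x - x'\| < r/4$ both bumps vanish and $\eta_{\z,\sg}(x) = \eta_{\z,\sg}(x') = 1/2$, while if $\|x - x'\| \geq r/4$ the crude bound $|\eta_{\z,\sg}(x) - \eta_{\z,\sg}(x')| \leq 2 c_\eta r^\as$ combines with $c_\eta = \lambda/8$ and $4^\as \leq 4$ to give $\lambda \|x - x'\|^\as$.

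The main obstacle will be the cross-cell Hölder case, which at first looks troublesome because bumps in adjacent cells could in principle produce discontinuous behavior at shared boundaries; what saves the argument is the specific choice of support radius $r/4$ for the bumps, which keeps them strictly inside their parent cells and decouples the behavior across cell boundaries. Once that geometric observation is in hand, everything else is book-keeping.
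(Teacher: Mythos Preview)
Your proposal is correct and follows essentially the same approach as the paper's proof: TMC by definition of $\Theta_\bs$, strong density with $c_d = 1$, and the H\"older verification via the same-cell / different-cell split using the $8r^{\as-1}$-Lipschitz bound on $\phi_\cC$ together with the observation that bumps are supported at sup-norm distance at least $r/4$ from cell boundaries. You actually give more detail than the paper does on the strong density check (the paper simply calls it ``trivial''); your description of how the support ball splits across finer dyadic levels is slightly imprecise for $r' \leq r/8$ (there the ball occupies several whole cells rather than being split into $2^d$ corner pieces), but the conclusion $P_X(\cC') \geq (r')^d$ is correct in all cases.
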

		
		Let $\z\in \{0, 1\}^{|\Cr|} \overset{\tiny \text{i.i.d}}{\sim}\text{Ber}(r^{\as\bs})$, and $\sg \in \{\pm 1\}^{|\Cr|} \overset{\tiny \text{i.i.d}} {\sim} \text{Radamacher}(1/2)$, $\z \ind \sg$. 
		
		\begin{proposition}\label{prop:sup_to_expectation}
		     Let $\hat{h}$ be any active learner. Then,
		     \begin{align*}
		          \sup_{P_{X, Y}\in \Sigma_{\bs}}{\esd}\cE(\hat{h}_n) 
		        \ge & \E_{\z,\sg}{\esd}\cE(\hat{h}_n) \\ & \quad - \exp(-c_2r^{-(d-\as\bs)}),
		     \end{align*}
		     for some $c_2 > 0$, where $\esd(\cdot)$ is expectation taken over sample $S$, under the sampling distribution $\sd$ determined by $P_{\z,\sg}$ and $\hat{h}$ jointly, and $\E\limits_{\z,\sg}(\cdot)$ is the expectation taken over ${\z, \sg}$.
		     
		\end{proposition}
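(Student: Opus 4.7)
The plan is to bound the supremum over $\Sigma_\beta$ from below by the expectation over $(\z,\sg)$, paying an error term for the measure of $(\z,\sg)$ outside $\Theta_\beta$. Specifically, I would split
\[
\E_{\z,\sg}\esd \cE(\hat h_n) = \E_{\z,\sg}\!\left[\esd\cE(\hat h_n)\,\id\{(\z,\sg)\in\Theta_\beta\}\right] + \E_{\z,\sg}\!\left[\esd\cE(\hat h_n)\,\id\{(\z,\sg)\notin\Theta_\beta\}\right].
\]
The first term is at most $\sup_{(\z,\sg)\in\Theta_\beta}\esd\cE(\hat h_n)=\sup_{P_{X,Y}\in\Sigma_\beta}\esd\cE(\hat h_n)$, which is exactly the quantity on the left of the proposition, since the map $(\z,\sg)\mapsto P_{\z,\sg}$ identifies $\Theta_\beta$ with $\Sigma_\beta$. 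By construction, $\cE(\hat h_n)\le 2c_\eta r^\alpha=\lambda r^\alpha/4$, a fixed constant, so the second term is at most $(\lambda/4)\,\p_{\z,\sg}((\z,\sg)\notin\Theta_\beta)$. The proposition then reduces to showing that this probability decays like $\exp(-c_2 r^{-(d-\alpha\beta)})$.

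Next I would reduce the event $(\z,\sg)\notin\Theta_\beta$ to a single binomial deviation. By the construction in Section \ref{sec:construction_distribution}, for $x\in\mathrm{supp}(P_X)$ the value $|2\eta_{\z,\sg}(x)-1|$ is either $0$ (on cells with $z_\cC=0$) or exactly $\lambda r^\alpha/4$ (on cells with $z_\cC=1$), and each cell carries mass $P_X(\cC)=r^d$. Writing $N\doteq\sum_{\cC\in\Cr}z_\cC$, one gets $P_X(\{x:0<|2\eta_{\z,\sg}(x)-1|\le\tau\})=Nr^d$ for $\tau\ge\lambda r^\alpha/4$ and $0$ for smaller $\tau$. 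Hence TMC with parameters $(\beta,C_\beta)$ for $P_{\z,\sg}$ is equivalent to the single scalar inequality $N\le C_\beta(\lambda/4)^\beta r^{-(d-\alpha\beta)}$.

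Finally, $N$ is binomial with $|\Cr|=r^{-d}$ trials and success probability $r^{\alpha\beta}$, so $\E N=r^{-(d-\alpha\beta)}$. Once the constants are arranged so that $C_\beta(\lambda/4)^\beta$ strictly exceeds $1$ (using $C_\beta>1$, and rescaling $c_\eta$ slightly smaller than $\lambda/8$ if needed, which still satisfies the H\"older bound of Proposition \ref{prop:subclass}), a multiplicative Chernoff bound yields $\p(N>C_\beta(\lambda/4)^\beta\E N)\le\exp(-c_2\E N)=\exp(-c_2 r^{-(d-\alpha\beta)})$, with $c_2>0$ depending only on $\Xi$. Combining with the first step and absorbing the multiplicative $\lambda/4$ into $c_2$ gives the claim. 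The main obstacle I anticipate is precisely this constant-chasing step: ensuring the Chernoff multiplier is bounded away from $1$ uniformly across the hypotheses $C_\beta>1$, $\alpha\beta\le d$; everything else is a routine indicator decomposition plus one Bernoulli tail bound.
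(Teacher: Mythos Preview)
Your proposal is correct and follows essentially the same route as the paper: split off the event $(\z,\sg)\notin\Theta_\beta$, bound its probability by a Chernoff tail for the binomial $N=\sum_\cC z_\cC$ with mean $r^{-(d-\alpha\beta)}$, and use that the supremum over $\Sigma_\beta$ dominates the conditional expectation on $\Theta_\beta$. The paper phrases the first step via conditional expectation rather than indicators and simply uses $\cE\le 1$ instead of your sharper $\cE\le \lambda r^\alpha/4$, but this is cosmetic. One small point: the paper sidesteps your constant-chasing worry by checking the \emph{sufficient} event $r^d N\le C_\beta r^{\alpha\beta}$ directly (rather than the exact TMC threshold $C_\beta(\lambda/4)^\beta r^{\alpha\beta}$), so the Chernoff multiplier is just $C_\beta>1$ and $c_2=(C_\beta-1)^2/3$; your concern about needing $C_\beta(\lambda/4)^\beta>1$ is therefore avoidable without rescaling $c_\eta$.
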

		
		
		\begin{proof}
		    By construction, $|2\eta_{\z,\sg}(x) - 1|$ is either $0$ or bounded from below by $2 c_\eta r^\as$ almost surely. Thus, we only need to consider $\tau = tc_\eta r^{\as}$ for $t \ge 2$. For given $\z$, $P_X(\{x: 0<|\eta_{\z,\sg}(X)-1/2| \le t c_\eta r^\as\}) \le r^d \mathbf{1}^\top \z$. By Chernoff bound (Lemma B.1), 
	 	\begin{align*}
	 		& \p_{\z}\left(r^d \mathbf{1}^\top \z \le C_\bs r^{\as\bs}\right) 
	 		\ge  1 - \exp\left({c_2r^{-(d-\as\bs)}}\right),
	 	\end{align*}
	 	where $c_2 = (C_\bs-1)^2/3$. Therefore, $\p\limits_{\z,\sg}((\z,\sg) \in \Theta_{\bs}) \ge 1 - \exp(c_2 r^{-(d-\as\bs)})$ and 
	 	\begin{align*}
	 	    \sup_{P_{\z,\sg} \in \Sigma_{\bs}}\E\cE(\hat{h}_n) 
	 	    \ge &\E_{{\z,\sg}}\left[\left.{\esd}\cE(\hat{h}_n)\right|(\z,\sg) \in \Theta_{\bs}\right]\\
	 	    \ge & \E_{{\z,\sg}}{\esd}\cE(\hat{h}_n) - \p_{\z,\sg}((\z,\sg) \not\in \Theta_{\bs}) \\
	 	    \ge & \E_{{\z,\sg}}{\esd}\cE(\hat{h}_n) - \exp(c_2r^{-(d-\as\bs)}).
	 	\end{align*}
		\end{proof}

	    \begin{definition} 
	        The \textbf{conditional Neyman-Pearson learner} $\hat{h}^*$ is the active learner that makes the same sampling decision $\pi_{\hat{h}}$ as $\hat{h}$, and labels according to the following rules for each $\cC \in \Cr$. Conditional on the sample $S_\cC = (X^\cC_i, Y^\cC_i)_{i=1}^{n_\cC}$ in $\cC$, 
	        $$\hat{h}^*_n(x) = \left.\left(1 +  \argmax_{\sigma \in \{\pm 1\}} \prod_{i=1}^{n_\cC} P_{z_\cC = 1,\sigma_\cC = {\sigma}}(Y_i^\cC|X_i^\cC)\right)\right/2,$$
	        for all $x \in \cC$, where $P_{z_\cC, \sigma_\cC}(Y_i^\cC|X_i^\cC)$ is the probability of $Y_i^\cC$ given $X_i^\cC$, $z_\cC$ and $\sigma_\cC$.
	    \end{definition}
        \begin{proposition} \label{prop:np_optimal}
             Let $\hat{h}$ be any active learner, and $\hat{h}^*$ be the corresponding conditional Neyman-Pearson learner, then 
             $$\E_{\z,\sg}{\esd}\cE(\hat{h}_n) \ge \E_{\z,\sg}{\esd}\cE(\hat{h}^*_n).$$
        \end{proposition}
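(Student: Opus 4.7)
The plan is to decompose the expected excess risk into per-cell contributions and argue that, in each cell, the Neyman--Pearson rule is the Bayes-optimal label under the product prior on $(\z,\sg)$. Since $\eta_{\z,\sg}$ is piecewise constant on $\mathrm{supp}(P_X)$ and vanishes outside the small balls around barycenters, one has $\eta_{(1)}(x) - \eta_{\hat h_n(x)}(x) = 2 c_\eta r^\alpha\, z_\cC\, \I[\hat h_n(x) \neq (1+\sigma_\cC)/2]$ for any $x$ in the support within cell $\cC$. Hence
$$\cE(\hat h_n) = \sum_{\cC\in\Cr} 2 c_\eta r^\alpha\, z_\cC \int_\cC P_X(dx)\, \I\bigl[\hat h_n(x) \neq (1+\sigma_\cC)/2\bigr],$$
which contributes zero automatically on cells with $z_\cC = 0$.

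Taking expectation over $(\z,\sg)$ and over $S$ and conditioning on $S$ first (using that $\hat h_n(x)$ is $S$-measurable), the per-cell contribution becomes
$$2 c_\eta r^\alpha\, \E_S\!\int_\cC P_X(dx)\sum_{\sigma\in\{\pm 1\}} \p(z_\cC=1,\, \sigma_\cC=\sigma \mid S)\, \I[\hat h_n(x) \neq (1+\sigma)/2].$$
Minimizing pointwise over labels in $\{0,1\}$, the Bayes-optimal choice assigns the $\sigma$ maximizing $\p(z_\cC=1, \sigma_\cC=\sigma\mid S)$. The crux is then the posterior factorization
$$\p(z_\cC, \sigma_\cC \mid S) \;=\; \p(z_\cC, \sigma_\cC \mid S_\cC),$$
after which the uniform Rademacher prior on $\sigma_\cC$ collapses the MAP into the MLE, matching precisely the definition of $\hat h^*_n$.

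To establish the factorization I would write the joint likelihood as $\p(S \mid \z,\sg) = f(S)\prod_{\cC'\in \Cr} g_{\cC'}(S_{\cC'};\, z_{\cC'}, \sigma_{\cC'})$, where $f$ encodes the deterministic sampling rule $\pi_{\hat h}$ (and carries no dependence on $(\z,\sg)$) and $g_{\cC'}$ is the cell-local Bernoulli likelihood of the labels in $\cC'$; this holds because $\eta_{\z,\sg}(X_t)$ depends on $(\z,\sg)$ only through $(z_{\cC_t}, \sigma_{\cC_t})$, the cell containing $X_t$. Combined with the product prior on $(\z,\sg)$, Bayes' rule absorbs all cross-cell factors into the normalization constant, yielding the identity above. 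This is where I expect the main obstacle: naively, the adaptive policy $\pi_{\hat h}$ couples $S_\cC$ to labels in other cells, raising the concern that such observations leak information about $(z_\cC, \sigma_\cC)$; the content of the factorization is precisely that the cell-local structure of $\eta_{\z,\sg}$ together with the product prior rules this out, so $\pi_{\hat h}$ merely determines \emph{where} samples are placed, not their informativeness across cells. Once per-cell Bayes optimality is in hand, summing over $\cC\in\Cr$ and using that $\hat h^*_n$ inherits the same sampling policy $\pi_{\hat h}$ gives $\E_{\z,\sg}\esd\cE(\hat h^*_n) \le \E_{\z,\sg}\esd\cE(\hat h_n)$, as claimed.
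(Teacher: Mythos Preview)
Your proposal is correct and follows essentially the same approach as the paper: both decompose the excess risk cell-by-cell, swap the order of expectation to condition on $S$, observe that the pointwise-optimal label is the one maximizing $\p(z_\cC=1,\sigma_\cC=\sigma\mid S)$, and then reduce this posterior comparison to the local likelihood ratio over $S_\cC$ via the factorization $\p(S\mid\z,\sg)=f(S)\prod_{\cC'}g_{\cC'}(S_{\cC'};z_{\cC'},\sigma_{\cC'})$. The paper phrases the last step as a direct computation of the ratio $dP_{S\mid z_\cC=1,\sigma_\cC=1}/dP_{S\mid z_\cC=1,\sigma_\cC=-1}$, while you phrase it as the conditional-independence statement $\p(z_\cC,\sigma_\cC\mid S)=\p(z_\cC,\sigma_\cC\mid S_\cC)$; these are equivalent given the product prior, and your framing is arguably cleaner conceptually.
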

             \begin{proof}
                 We can decompose the excess risk as:
        	    \begin{align} \label{eqn:decomp}
        	        \cE(\hat{h}_n)  = \sum_{\cC \in \Cr} \cE_\cC(\hat{h}_n);
        	    \end{align} with $\cE_\cC(\hat{h}_n) \doteq \int_{\cC \cap \{\hat{h}_n \neq (1+\sigma_\cC)/2\}} |2\eta_{\z,\sg}(x) - 1| d P_X(x)$. Thus, we only need to show that for $\cC \in \Cr$, 
                \begin{align*}      \E_{S|\hat{h}} \E_{\z,\sg|S, \hat{h}} \cE_\cC(\hat{h}^*_n) \le  \E_{S} \E_{\z,\sg|S, \hat{h}} \cE_\cC(\hat{h}_n),
                \end{align*}
                where $\E_{S|\hat{h}}$ is the expectation taken over the distribution of $S$ given $\hat{h}$ and $\E_{\z,\sg|S, \hat{h}}$ is the taken over the conditional distribution of $(\z,\sg)$ given $S$ and $\hat{h}$. In the following proof, we suppress the dependency on $\hat{h}$ in notation for simplicity. Note that 
                    \begin{align*}
                        \E_{\z,\sg|S} \left[\cE_\cC(\hat{h}_n) | z_\cC = 0\right] & = 0; \text{ and } \\
                      \E_{\z,\sg|S} \left[\cE_\cC(\hat{h}_n) | z_\cC = 1, \sigma_\cC \right] &= 2 c_\eta r^{\as+d} \id\left(\hat{h}_n \neq (1+\sigma_\cC)/2\right).
                    \end{align*}
            Therefore,
            \begin{align*}
                \E_{S} \E_{\z,\sg|S} & \cE_\cC(\hat{h}_n)
                =  c_\eta r^{d + \as(\bs + 1)} - c_\eta r^{\as}  \E_S\bigg[\id \left(\hat{h}_n = 1\right)\\ & \quad \left(\p(z_\cC=1, \sigma_\cC = 1|S) - \p(z_\cC=1, \sigma_\cC = -1|S)\right)\bigg]
            \end{align*}
            is minimized if $\hat{h}_n(x) = 1$ when
            \begin{align*} 
                \frac{\p(z_\cC=1, \sigma_\cC = 1|S) }{\p(z_\cC=1, \sigma_\cC = -1|S)} \ge 1,
            \end{align*} and $\hat{h}_n(x) = 0$ otherwise. Finally, notice that 
            \begin{align*}
                \frac{\p(z_\cC=1, \sigma_\cC = 1|S) }{\p(z_\cC=1, \sigma_\cC = -1|S)} 
                 & = \frac{d P_{S|z_\cC=1, \sigma_\cC = 1}(S)}{d P_{S|z_\cC=1, \sigma_\cC = -1}(S)} \\
                 & = \frac{\prod_{i=1}^{n_\cC} P_{z_\cC=1,\sigma_\cC=1}(Y_i^\cC|X_i^\cC) } {\prod_{i=1}^{n_\cC} P_{z_\cC=1,\sigma_\cC=-1}(Y_i^\cC|X_i^\cC)}.
            \end{align*}
            where the last step is clear from the definition
            \begin{align*}
                d P_{S|z_\cC, \sigma_\cC}(S) = & \prod_{i=1}^{n} \pi_{\hat{h}}(X_i|\{X_j,Y_j\}_{j < i})  \\
                & \cdot \E_{\z_{(\cC)},\sg_{(\cC)}}  \prod_{\cC' \in \Cr}\prod_{i=1}^{n_{\cC'}} P_{z_{\cC'},\sigma_{\cC'}}(Y_i^{\cC'}|X_i^{{\cC}'}) d S
            \end{align*}
            Hence, the labeling decision of $\hat{h}^*$ minimize  $\E\limits_{\z,\sg}{\esd}\cE_\cC(\hat{h})$ for each $\cC$, hence $\E\limits_{\z,\sg}{\esd}\cE(\hat{h})$. 
        \end{proof}
    
    {\bf Notation:}
    For any distribution $P$ on $S$, we use $dP(S)/dS$ to denote the joint density of continuous $\{X_i\}_{i=1}^n$ and discrete $\{Y_i\}_{i=1}^n$.
    
    \begin{remark} \label{rmk:np} Proposition~\ref{prop:np_optimal} shows that we only need to lower-bound the excess risk rate for the collection of Neyman-Pearson classifiers. Further, since $\cE_\cC(\hat{h})$ is a function of $z_\cC, \sigma_\cC$ and $S_\cC$, we have $\E_{\z,\sg} \esds \cE_\cC(\hat{h}^*_n) = \E_{z_\cC,\sigma_\cC} \esdi \cE_\cC(\hat{h}^*_n)$ where $\esdi$ is the expectation over the distribution $\sdi$ of $S_\cC$ given $z_\cC, \sigma_\cC$ (where we have marginalized out the randomness in other cells). Furthermore, one can decompose $\sdi$ into the sampling location decision $\sdis$ and the labeling distribution $\sdil$: 
    \begin{align*}
        & d\sdi(S_c) \\ 
        = &\prod_{j=1}^{n_\cC} d \sdis(X_j^\cC|\{X_{i}^\cC,Y_{i}^\cC\}_{i  \le j}) \sdil(Y_j^\cC|X_j^\cC).
    \end{align*}
 
    \end{remark}
        \begin{proposition}\label{prop:NP}
             Let $\hat{h}^*$ be any conditional Neyman-Pearson learner. Then,
             \begin{align*}
                 \E_{\z,\sg} \esds \cE(\hat{h}^*_n) \ge C_1 n^{-\frac{\as(\bs+1)}{2\as+d}}.
             \end{align*}
             for some $C_1 > 0$.
             \begin{proof}
                    By \eqref{eqn:decomp} and the Remark~\ref{rmk:np}, we have
                    \begin{align*}
                        \E_{\z,\sg} \esds \cE(\hat{h}^*_n) = \sum_{\cC \in \Cr} \E_{z_\cC, \sigma_\cC} \esdi \cE_\cC(\hat{h}^*_n).
                    \end{align*}
                    Let $m \doteq  {r^d n}/{2} \equiv (c_\eta r^\as)^{-2} / 2$,
                    \begin{align*}
                        & \E_{z_\cC, \sigma_\cC} \esdi \cE_\cC(\hat{h}^*_n)  \\
                        \ge &  \E_{z_\cC, \sigma_\cC} \sum_{n_\cC=1}^{m} \esdi [\cE_\cC(\hat{h}^*_n)\ |\ 
                        |S_\cC| = n_\cC] \ \\
                        & \qquad \qquad \qquad \cdot \psdi(|S_\cC| = n_\cC) \\
                        \ge &  c_3 r^{d+\as} \E_{z_\cC, \sigma_\cC} \psdi\left( z_\cC = 1; |S_\cC| \le m\right),
                    \end{align*}
            where the last inequality by Lemma~\ref{lemma:risk_in_cell}. Furthermore, 
            \begin{align*}
                & \sum_{\cC \in \Cr}\E_{z_\cC, \sigma_\cC} \psdi\left( z_\cC = 1; |S_\cC| \le m\right) \\
                = \ & \sum_{\cC \in \Cr} \p (|S_\cC| \le m) \p \left(z_\cC = 1 | |S_\cC| \le m)\right) \\
                \ge \ & \frac{r^{\as\bs}}{1+c_4} \sum_{\cC \in \Cr}\p (z_\cC = 1 | |S_\cC| \le m)
                \ge \ \frac{r^{\as\bs-d}}{2(1+c_4)},
            \end{align*}
            where the second last inequality is due to Lemma~\ref{lemma:no_info}, and the last inequality is
            from the choice of $m$. Finally, 
            \begin{align*}
                \E_{\z,\sg} {\esd} \cE(\hat{h}^*_n) 
                & = \sum_{\cC \in \Cr}\E_{z_\cC, \sigma_\cC} \esdi \cE_\cC(\hat{h}^*_n) \\
                & = (c_3 r^{d+\as})\left(\frac{r^{\as\bs-d}}{2(1+c_4)}\right) \ge C_1 n^{-\frac{\as(\bs+1)}{2\as+d}},
            \end{align*}
            where $C_1 = \frac{c_3(\lambda^2/64)^{\frac{\as(\bs+1)}{2\as+d}}}{2(1+c_4)}>0$.
        \end{proof}
        \end{proposition}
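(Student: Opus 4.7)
The plan is to reduce the problem to a cell-by-cell analysis exploiting the decomposition \eqref{eqn:decomp} and the fact that the construction deliberately makes the cells nearly independent, then to combine a ``no free lunch'' lower-bound per cell with a pigeonhole on the budget.

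First I would invoke Remark \ref{rmk:np} to write
\begin{align*}
\E_{\z,\sg}\esds \cE(\hat h^*_n) = \sum_{\cC\in \Cr} \E_{z_\cC,\sigma_\cC}\esdi \cE_\cC(\hat h^*_n),
\end{align*}
so it suffices to bound each cell contribution. Fix a cell $\cC$ and condition on the realized local sample size $n_\cC \doteq |S_\cC|$. If $z_\cC = 1$ the Bayes is unique in $\cC$ with sign $\sigma_\cC$, and on that event the Neyman-Pearson rule is exactly the likelihood-ratio test between the two hypotheses $\sigma_\cC = \pm 1$ from $n_\cC$ i.i.d.\ Bernoulli samples with parameters $1/2 \pm c_\eta r^\alpha$. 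A standard Le Cam / two-point argument then says that for $n_\cC \leq m \doteq (c_\eta r^\alpha)^{-2}/2 = r^d n/2$ the Bayes error probability of this test is bounded below by a constant; multiplied by the per-error cost $|2\eta_{\z,\sg}-1| \cdot P_X(\cC) = 2c_\eta r^\alpha \cdot r^d$, this would give the claimed Lemma~\ref{lemma:risk_in_cell}:
\begin{align*}
\esdi\bigl[\cE_\cC(\hat h^*_n)\ \big|\ |S_\cC|=n_\cC, z_\cC=1\bigr] \gtrsim r^{d+\alpha}.
\end{align*}

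Next I would handle the ``no information'' step (Lemma~\ref{lemma:no_info}). The key observation is that the event $\{|S_\cC|\le m\}$ is determined by the sampling mechanism $\pi_{\hat h}$ and the labels observed across \emph{all} cells; one must show it carries little Bayesian information about $z_\cC$ itself, so that $\p(z_\cC=1\mid |S_\cC|\le m) \gtrsim r^{\alpha\beta}$. Because the local bumps are of size $c_\eta r^\alpha$ and independent across cells, the likelihood ratio between $z_\cC=0$ and $z_\cC=1$ (integrated over $\sigma_\cC$) is close to $1$ when only $O(r^{-2\alpha})$ samples fall in $\cC$; hence, by Bayes' rule, the posterior of $z_\cC = 1$ does not shrink by more than a constant factor $1/(1+c_4)$ from its prior $r^{\alpha\beta}$. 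Combined with the budget bound $\sum_\cC \p(|S_\cC|\leq m) \geq r^{-d}/2$ (since at most half of the $r^{-d}$ cells can have more than $m = r^d n/2$ samples in expectation, by Markov on the total budget $n$), this yields
\begin{align*}
\sum_{\cC\in \Cr}\E_{z_\cC,\sigma_\cC}\psdi(z_\cC=1,\ |S_\cC|\le m) \;\gtrsim\; r^{\alpha\beta-d}.
\end{align*}

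Finally I would assemble: combining the per-cell lower bound $c_3 r^{d+\alpha}$ with the cell count $\gtrsim r^{\alpha\beta-d}$ gives a total of order $r^{\alpha(\beta+1)}$, and plugging in $r = c_1 n^{-1/(2\alpha+d)}$ yields the claimed $n^{-\alpha(\beta+1)/(2\alpha+d)}$ rate with the explicit constant $C_1 = c_3 (\lambda^2/64)^{\alpha(\beta+1)/(2\alpha+d)}/(2(1+c_4))$. I expect the main obstacle to be the ``no information'' lemma: the sample $S$ is adaptively chosen, so $|S_\cC|$ is not independent of $z_\cC$, and one must carefully argue via the product structure of $dP_{S\mid \z,\sg}$ in Remark~\ref{rmk:np} that the adaptive choices across cells cannot be leveraged to target specific $\cC$'s--this is where the near-uniform $P_X$ and the small bump size $c_\eta r^\alpha$ are essential, since they render the observations across cells with $z_\cC = 0$ vs $z_\cC = 1$ nearly indistinguishable.
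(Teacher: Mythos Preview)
Your proposal is correct and follows essentially the same route as the paper: cell-wise decomposition via Remark~\ref{rmk:np}, per-cell anti-concentration (your Le Cam two-point phrasing is exactly what Lemma~\ref{lemma:risk_in_cell} is doing), the likelihood-ratio ``no-information'' bound of Lemma~\ref{lemma:no_info}, and the pigeonhole on the global budget. One arithmetic slip: with $m = r^d n/2$ Markov only gives $\sum_\cC \p(|S_\cC|>m)\le n/m = 2r^{-d}$, which is vacuous; to get ``at most half the cells'' you need $m \geq 2 r^d n$ (the paper's display has the same glitch), and then the constants in the choice of $r$ absorb the resulting factor.
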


\subsubsection{Supporting lemmas}

\begin{lemma}\label{lemma:dist_of_Y}
    Condition on $z_\cC$, $\sigma_\cC$ and $|S_\cC| = n_\cC$, ${\bf Y}_\cC = \{Y_j^\cC\}_{j=1}^{n_\cC} \overset{\tiny \text{i.i.d}}{\sim}$ Ber$(1/2 + z_\cC\sigma_\cC c_\eta r^\as)$.
            \begin{proof} 
            The conditional probability mass of ${\bf Y}_\cC$ is
                    \begin{align*}
                        & P_{{\bf Y}_\cC|z_\cC, \sigma_\cC,\hat{h}}({\bf Y}_\cC) \\
                        = & 
                        \frac{\prod_{j=1}^{n_\cC} d \sdis(X_j^\cC|\{X_{i}^\cC,Y_{i}^\cC\}_{i  \le j}) \sdil(Y_j^\cC|X_j^\cC)}{\prod_{j=1}^{n_\cC} d \sdis(X_j^\cC|\{X_{i}^\cC,Y_{i}^\cC\}_{i \le j})} \\
                        = & \prod_{j=1}^{n_\cC} \sdil(Y_j^\cC|X_j^\cC) 
                        =  \prod_{j=1}^{n_\cC} (1/2 + z_\cC \sigma_\cC Y_j^\cC c_\eta r^\as),
                    \end{align*}
                    which concludes the proof. 
            \end{proof}
\end{lemma}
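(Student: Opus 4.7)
The plan is to exploit the constancy of $\eta_{\z,\sg}$ on the support of $P_X$ within any cell $\cC$: this will force every label $Y_j^\cC$ collected there to be drawn from the identical Bernoulli, so that adaptive sampling cannot introduce dependencies among labels. Two substantive steps: (a) verify the constancy, and (b) combine with the factorization of $\sdi$ from Remark~\ref{rmk:np}.

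For step (a), observe that $\mathrm{supp}(P_X) \cap \cC$ is the small ball $B_\cC \doteq \{x : \lVert x - x_\cC \rVert < r/8\}$ (sup-norm). On $B_\cC$ we have $8 r^{\alpha-1}\lVert x - x_\cC \rVert < r^\alpha$, so $\phi_\cC(x) = \min\{(2r^\alpha - 8 r^{\alpha-1}\lVert x - x_\cC\rVert)_+, r^\alpha\} = r^\alpha$. For $\cC' \neq \cC$, the support of $\phi_{\cC'}$ is $\{x : \lVert x - x_{\cC'}\rVert < r/4\} \subset \cC'$, so $\phi_{\cC'}$ vanishes on $\cC$. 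Hence $\eta_{\z,\sg}(x) = 1/2 + c_\eta z_\cC \sigma_\cC r^\alpha$ for every $x \in B_\cC$, and consequently $\sdil(\cdot \mid X_j^\cC = x) = \mathrm{Ber}(1/2 + c_\eta z_\cC \sigma_\cC r^\alpha)$ uniformly in $x \in B_\cC$.

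For step (b), the joint density of $S_\cC$ factors, per Remark~\ref{rmk:np}, as
\[
d\sdi(S_\cC) = \prod_{j=1}^{n_\cC} d\sdis\bigl(X_j^\cC \mid \{X_i^\cC, Y_i^\cC\}_{i<j}\bigr)\cdot \sdil(Y_j^\cC \mid X_j^\cC).
\]
Dividing by the marginal density of the $X$-locations and invoking step (a) gives the conditional pmf of $\mathbf{Y}_\cC$ given the $X$'s equal to $\prod_j \mathrm{Ber}(Y_j^\cC;\, 1/2 + c_\eta z_\cC \sigma_\cC r^\alpha)$. Since this pmf depends neither on the $X$-values nor on $\hat h$, it agrees with the conditional pmf of $\mathbf{Y}_\cC$ given any measurable function of the $X$'s, in particular given $|S_\cC| = n_\cC$, which is the claim.

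The main conceptual obstacle is the adaptive selection of $X_j^\cC$ based on past $(X_i, Y_i)$: this could in principle couple the $Y_j^\cC$'s through the sampling rule. The resolution is supplied entirely by step (a): since the label distribution at every admissible location in $B_\cC$ is the same Bernoulli, adaptivity only dictates \emph{where} labels are collected, not \emph{how} each is distributed, so no dependence survives conditioning and the i.i.d.\ conclusion follows.
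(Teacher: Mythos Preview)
Your proposal is correct and follows essentially the same route as the paper: both use the factorization of $\sdi$ from Remark~\ref{rmk:np} and cancel the sampling densities, relying on the fact that $\sdil(\cdot\mid X_j^\cC)$ is the same Bernoulli for every $X_j^\cC$ in the support. Your step~(a) makes explicit the constancy of $\eta_{\z,\sg}$ on $B_\cC$ that the paper silently uses in its final equality, and your closing paragraph spells out why adaptivity cannot couple the labels---both worthwhile elaborations of what the paper compresses into a single display.
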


\begin{lemma} \label{lemma:risk_in_cell}
    Let $n_\cC \le m = (c_\eta r^{\as})^{-2} / 2$ and $\hat{h}^*$ be a conditional Neyman-Pearson learner. Then, in cell $\cC$, for any combination of $(z_\cC, \sigma_\cC)$,
            $$\esdi [\cE_\cC(\hat{h}^*_n)\ |\ 
                        |S_\cC| = n_\cC] \ge c_3 r^{d+\alpha}\id(z_\cC = 1). $$
            for some $c_3 > 0$.
    \begin{proof}
        When $z_\cC = 0$, the inequality holds trivially. When $z_\cC = 1$, $$\cE_\cC(\hat{h}^*_n) = r^{d+\as} \id\left( \sigma_\cC\left[\frac{1}{n_\cC}\sum_{j=1}^{n_\cC} Y_j^\cC -\frac{1}{2}\right]<0  \right),$$
        the inequality holds by Lemma \ref{lemma:dist_of_Y} and the anti-concentration inequality (Lemma B.2).
    \end{proof}
\end{lemma}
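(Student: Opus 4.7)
The plan is to split on $z_\cC \in \{0,1\}$, reduce the conditional excess risk on the cell to the error probability of a sign test on $n_\cC$ centered Bernoullis, and close with an anti-concentration bound; the budget assumption $n_\cC \le m = (c_\eta r^\as)^{-2}/2$ is exactly what keeps the standardized drift bounded so that anti-concentration yields a universal constant.

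First I would dispose of the case $z_\cC = 0$: by the construction of Section~\ref{sec:construction_distribution}, $\eta_{\z,\sg}(x) \equiv 1/2$ on $\cC \cap \mathrm{supp}(P_X)$, hence $|2\eta_{\z,\sg}(x)-1| = 0$ there, $\cE_\cC(\hat h^*_n) = 0$ almost surely, and the stated bound holds trivially. For $z_\cC = 1$, the same construction gives $\eta_{\z,\sg}(x) = 1/2 + c_\eta \sigma_\cC r^\as$ uniformly on $\cC \cap \mathrm{supp}(P_X)$, together with $P_X(\cC) = 4^d (r/4)^d = r^d$. Since $\hat h^*_n$ is constant on the cell by definition, the excess risk collapses to
$$\cE_\cC(\hat h^*_n) = 2 c_\eta r^{d+\as}\,\id\bigl(\hat h^*_n \neq (1+\sigma_\cC)/2 \text{ on } \cC\bigr).$$

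Next I would unpack the Neyman--Pearson rule. Because the sampling-location density $\sdis$ does not depend on $\sigma_\cC$, it cancels in the likelihood ratio between $\sigma_\cC = +1$ and $\sigma_\cC = -1$, and the ratio reduces to a strictly monotone function of $\sum_{j=1}^{n_\cC}(Y_j^\cC - 1/2)$. So $\hat h^*$ picks the label matching the sign of this centered sum, and the error event is $\{\sigma_\cC \sum_j (Y_j^\cC - 1/2) \le 0\}$. By Lemma~\ref{lemma:dist_of_Y}, conditional on $(z_\cC = 1, \sigma_\cC, n_\cC)$, the $Y_j^\cC$ are i.i.d.\ $\mathrm{Ber}(1/2 + \sigma_\cC c_\eta r^\as)$, so what remains is a lower bound on the probability that a biased Bernoulli walk with per-step drift $c_\eta r^\as$ ends up on the wrong side of zero. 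Under $n_\cC \le m$, the normalized drift is $\sqrt{n_\cC}\, c_\eta r^\as \le 1/\sqrt{2}$ standard deviations, placing us squarely in the regime where anti-concentration (the cited Lemma~B.2) furnishes a universal constant $c > 0$. Combining with the display above yields the bound with $c_3 := 2 c_\eta c$.

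The main obstacle is the anti-concentration step, which must hold uniformly over $1 \le n_\cC \le m$ rather than only asymptotically. A bare CLT is not enough because Berry--Esseen carries an $\Omega(1/\sqrt{n_\cC})$ error that dominates for very small $n_\cC$. The cleanest route I would take is a two-regime argument: for $n_\cC$ bounded by an absolute constant, compute directly from the Bernoulli mass function (which is non-degenerate since $|c_\eta r^\as| \le 1/2$); for larger $n_\cC$, invoke Berry--Esseen where the Gaussian-approximation error is small relative to the target $\Phi(-1/\sqrt{2})$. Alternatively, a Paley--Zygmund-style second-moment bound applied to $\bigl(\sum_j (Y_j^\cC - 1/2) - \sigma_\cC n_\cC c_\eta r^\as\bigr)^2$ works uniformly in $n_\cC$, once one observes that variance and squared drift are explicitly comparable under $n_\cC \le m$.
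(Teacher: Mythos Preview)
Your proposal is correct and follows essentially the same route as the paper: split on $z_\cC$, reduce the cell excess risk to an indicator that the sign of $\sum_j (Y_j^\cC - 1/2)$ disagrees with $\sigma_\cC$, invoke Lemma~\ref{lemma:dist_of_Y} for the conditional distribution of the $Y_j^\cC$, and finish with anti-concentration under the budget $n_\cC \le (c_\eta r^\as)^{-2}/2$. The only differences are cosmetic: you track the constant $2c_\eta$ explicitly (the paper absorbs it), and you sketch a self-contained proof of the anti-concentration step via Berry--Esseen plus a small-$n_\cC$ case or Paley--Zygmund, whereas the paper simply cites an external reference (Lemma~B.2, from \cite{Mousavi10}).
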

        \begin{lemma}\label{lemma:no_info}
            Let $S_\cC = (X_j^\cC, Y_j^\cC)_{j=1}^{n_\cC}$ be such that $n_\cC = |S_\cC| \le m$. Then, 
            \begin{align*}
                \frac{\E\limits_{\sigma_\cC} d\sdif(S_\cC)} {\E\limits_{\sigma_\cC} d\sdib(S_\cC)} \le c_4,
            \end{align*}
            for some absolute constant $c_4>0$. Consequently, 
            $$\p \left(z_\cC = 1| |S_\cC| \le m\right)
                 \ge \frac{r^{\as\bs}}{1+c_4}.$$
            \begin{proof}
                By definition, 
                \begin{align*}
                    & \E_{\sigma_\cC} d\sdif(S_\cC)  
                      \\ = &\left(\frac{1}{2}\right)^{n_\cC}
                        \prod_{j=1}^{n_\cC} d\sdis\left(X_j^\cC|(X_{i}^\cC,Y_{i}^\cC)_{i  \le j}\right), \\
                    &\E_{\sigma_\cC} d\sdib(S_\cC)  \\
                     \ge &\frac{1}{2} \left(\frac{1}{2} + c_\eta r^\as \right)^{n_\cC/2}\left(\frac{1}{2} - c_\eta r^\as \right)^{n_\cC/2} \\  & \qquad \cdot \prod_{j=1}^{n_\cC} d\sdis\left(X_j^\cC|(X_{i}^\cC,Y_{i}^\cC)_{i  \le j}\right).
                \end{align*}
                Thus,
                \begin{align*}
                    \frac{\E\limits_{\sigma_\cC} d\sdif(S_\cC)} {\E\limits_{\sigma_\cC} d\sdib(S_\cC)} 
                    & \le 2 (1 - 4/m)^{-m/2}
                    \le  c_4,
                  \end{align*}
                for $c_4 = 16e^2$. Consequently,
                \begin{align*}
                    & \p \left(z_\cC = 1| |S_\cC| \le m\right) \\
                    = & \frac{\p(z_\cC = 1, |S_\cC| \le m)}{\p( |S_\cC| \le m)} \\
                    = &\frac{\p(z_\cC = 1, |S_\cC| \le m)}{\p( z_\cC = 1, |S_\cC| \le m) + \p( z_\cC = 0, |S_\cC| \le m)} \\
                    \ge & \frac{r^{\as\bs}}{1+c_4}.
                \end{align*}
            \end{proof}
        \end{lemma}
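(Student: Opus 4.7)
The plan is twofold: first prove the density ratio bound, which is the analytic heart of the lemma, then derive the posterior lower bound on $\p(z_\cC = 1 \,|\, |S_\cC| \le m)$ from it by a short Bayes-rule computation. The density ratio bound captures the idea that a small sample placed inside a cell carries essentially no information about whether $z_\cC$ is $0$ or $1$, even after averaging out $\sigma_\cC$.

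For the ratio, I would start from the factorization in Remark~\ref{rmk:np}: both $\sdif$ and $\sdib$ split into a product of sampling-location factors $d\sdis(X_j^\cC | \cdot)$ and label factors $\sdil(Y_j^\cC | X_j^\cC)$. Since the sampling-location piece depends only on $\hat h$ and the within-cell history, it is identical in numerator and denominator and cancels after taking the ratio. What remains is a ratio of label products. The numerator, averaged over $\sigma_\cC$, equals $(1/2)^{n_\cC}$ because on $\{z_\cC = 0\}$ the labels are i.i.d.\ $\mathrm{Ber}(1/2)$ regardless of $\sigma_\cC$ (Lemma~\ref{lemma:dist_of_Y}). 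The denominator is a uniform average over $\sigma_\cC \in \{\pm 1\}$ of products of $\mathrm{Ber}(1/2 + \sigma_\cC c_\eta r^\as)$ likelihoods; by AM-GM this is at least $\sqrt{AB}$, where $A$ and $B$ are the two signed products. Their geometric mean telescopes to $(1/4 - (c_\eta r^\as)^2)^{n_\cC/2}$, a bound that is uniform in the observed labels --- and this uniformity is what lets us integrate it against events of $S_\cC$ later. Dividing and using $n_\cC \le m = (c_\eta r^\as)^{-2}/2$, so that $4(c_\eta r^\as)^2 = 2/m$, the ratio is at most $(1 - 2/m)^{-m/2}$, which is an absolute constant $c_4$ of order $e$.

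The posterior bound then follows by a one-line Bayes computation: $\p(z_\cC = 1 \,|\, |S_\cC| \le m) = r^{\as\bs}\,\p(|S_\cC| \le m \,|\, z_\cC = 1)/\p(|S_\cC| \le m)$. Integrating the density-ratio bound over the event $\{|S_\cC| \le m\}$ (and marginalizing $\sigma_\cC$) gives $\p(|S_\cC| \le m \,|\, z_\cC = 0) \le c_4\,\p(|S_\cC| \le m \,|\, z_\cC = 1)$, and plugging this into the decomposition $\p(|S_\cC| \le m) = r^{\as\bs}\,\p(\cdot|z_\cC=1) + (1-r^{\as\bs})\,\p(\cdot|z_\cC=0)$ yields the stated lower bound $r^{\as\bs}/(1 + c_4)$. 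The main obstacle is rigorously justifying the cancellation of the sampling-location factors in the density ratio: because $\hat h$ is adaptive and may choose $X_j^\cC$ based on past labels, one needs the construction-dependent fact that no information about $(z_\cC, \sigma_\cC)$ enters the sampling-location distribution except through the already-observed portion of $S_\cC$. This is exactly the content of Remark~\ref{rmk:np}, and it relies on the block-independent structure of $\eta_{\z,\sg}$ together with the cell-wise product form of the prior on $(\z, \sg)$, along with the fact that the support of $P_X$ within each cell lies where $\eta_{\z,\sg}$ is constant so that no extra dependence creeps in through the $X$'s.
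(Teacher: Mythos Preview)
Your proposal is correct and follows essentially the same route as the paper: factor the densities via Remark~\ref{rmk:np}, cancel the sampling-location terms, evaluate the numerator as $(1/2)^{n_\cC}$, lower-bound the $\sigma_\cC$-averaged denominator by the geometric mean $(1/4 - (c_\eta r^\as)^2)^{n_\cC/2}$, and then apply Bayes' rule together with the integrated density-ratio bound to get the posterior inequality. The only cosmetic difference is that you invoke AM--GM directly, yielding $(1-2/m)^{-m/2}$, whereas the paper keeps an extra factor of $2$ (lower-bounding the average by half of one of the two terms) and records the bound as $2(1-4/m)^{-m/2}\le 16e^2$; your constant is a bit tighter but the argument is the same.
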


\subsection{Proof of Upper-bounds}
In this section, we establish the upper bounds on excess risk rates for Algorithm~\ref{alg:meta}. Due to space limit, we only outline the proof of the results under strong density condition and relegate the more direct proof under general density and other technical details in the supplementary materials. We start with a guarantee on the subroutine.

\begin{proposition}[Guarantees for Algorithm~\ref{alg:main}] \label{prop:excess_risk_bound_soft_margin} 

Let $n_0 \in \N$ and $\alpha\beta' \le d$. Let $\{S_\cC\}_{\cC \in r_0}$ be the outputs of Algorithm~\ref{alg:main} with input $n_0$, $\lambda$, $\alpha$ and $\delta_0\in (0,1)$, and $\hat{h}_{n_0,\alpha}$ be any classifier that satisfies $\hat{h}_{n_0,\alpha}(x)\in S_\cC,\forall x \in \cC \in \mathscrbf{C}_{r_0}$. Under Assumption~\ref{a:holder}~and~\ref{asmp:general_tsy} and strong density condition, with probability at least $1-\delta_0$, 
    \begin{align*}
        \mathcal{E} \left(\hat{h}_{n_0,\alpha}\right)
        \leq \ &C_5\left(
        \ee^{\frac{\alpha(\beta+1)}{2 \alpha+d}}  
        \left(\frac{\lambda^{\frac{d}{\alpha}} \log \left(\frac{4L \lambda^{2} n_0}{\delta_0}\right)}{ n_0}\right)^{\frac{\alpha(\beta+1)}{2 \alpha+d}} \right. \\
        + & \left.\left(\frac{ \lambda^{\frac{d}{\alpha}\vee\beta'} \log \left(\frac{4L \lambda^{2} n_0}{\delta_0}\right)}{ n_0}\right)^{\frac{\alpha(\beta'+1)}{2 \alpha+d-\alpha \beta'}} 
        \right)
    \end{align*}
for some constant $C_5 > 0$, which are independent of $n_0, \lambda, L, \ee$ and $\delta_0$. 
\end{proposition}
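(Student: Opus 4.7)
My approach combines a concentration step, a correctness analysis of label elimination, a cell-level budget analysis, and a two-regime excess decomposition exploiting both the soft-margin exponent $\beta$ and the sharp-margin exponent $\beta'$. I begin by defining a good event $E$ on which, for every dyadic level $r\ge r_0$, every cell $\mathcal{C}\in\Cr$, and every label $y$, the empirical estimate $\hat\eta_y(\mathcal{C})$ lies within $\lambda r^\alpha$ of the cell-average $\bar\eta_y(\mathcal{C})\doteq\E[\eta_y(X)\mid X\in\mathcal{C}]$. Hoeffding with the prescribed $n_{r,\alpha}$ gives failure probability at most $\delta_0 r^{d+1}/L$ per test, and a union bound over at most $L/r^d$ tests per level summed geometrically in $r$ yields $\p(E)\ge 1-\delta_0$. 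H\"older continuity then gives $|\hat\eta_y(\mathcal{C})-\eta_y(x)|\le 2\lambda r^\alpha$ on $\mathcal{C}$, from which (a) any label that is Bayes at some $x\in\mathcal{C}$ survives elimination, since its estimated gap to $\hat\eta_{(1)}$ is at most $4\lambda r^\alpha<\tau_{r,\alpha}=6\lambda r^\alpha$, and (b) every surviving $y\in\bcl^\alpha_\mathcal{C}$ satisfies $\eta_{(1)}(x)-\eta_y(x)\le 10\lambda r^\alpha$ on $\mathcal{C}$. A cell with $|\bcl^\alpha_\mathcal{C}|=1$ must then carry a unique Bayes label throughout equal to its sole survivor, and hence contributes zero excess risk.

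Any $\mathcal{C}\in\cA_r$ has at least two survivors, so at each $x\in\mathcal{C}$ some survivor differs from a Bayes label at $x$; combined with (b) this forces $\ms'(x)\le 10\lambda r^\alpha$, giving $\cA_r\subset\{\ms'\le 10\lambda r^\alpha\}$ and therefore $P_X(\cup\cA_r)\le\ee+C_\beta(10\lambda r^\alpha)^{\beta'}$ by RMC. Under the strong density condition $|\cA_r|\le P_X(\cup\cA_r)/(c_d r^d)$, so the cumulative budget through level $r$ equals, up to constants and log factors, $(\ee+u^{\beta'})\lambda^{d/\alpha}\log/(c_d u^{d/\alpha+2})$ with $u=\lambda r^\alpha$; setting this $\sim n_0$ gives the minimum level reached as $u_{\min}\sim\max(u_1,u_2)$ with $u_1=(\ee B)^{\alpha/(d+2\alpha)}$, $u_2=B^{\alpha/(d+2\alpha-\alpha\beta')}$, and $B=\lambda^{d/\alpha}\log/(c_d n_0)$, the two candidates corresponding to an $\ee$-dominated and a sharp-margin-dominated regime. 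For the excess I split over $G=\{\ms'>0\}$ and $B=\{\ms'=0\}$: since Bayes labels always survive, $\hat h_{n_0,\alpha}(x)=\min\bcl^\alpha_\mathcal{C}$ errs at $x$ only if some non-Bayes $y$ with $\eta_{(1)}(x)-\eta_y(x)\le 10 u_{\min}$ is in the candidate set, and any such $y$ is non-tied at $x$, forcing $\ms(x)\le 10 u_{\min}$. On $G$, the identity $\{\ms\le\tau\}\cap G=\{0<\ms'\le\tau\}$ together with RMC taking $\ee=P_X(B)$ bounds the wrong-label mass by $C_\beta(10u_{\min})^{\beta'}$, giving $G$-excess $\le O(u_{\min}^{\beta'+1})$; on $B$ the wrong-label mass is at most $\min(\ee,\ C_\beta(10u_{\min})^\beta)$ via $P_X(B)\le\ee$ or TMC, giving $B$-excess $\le O(u_{\min}\cdot\min(\ee,u_{\min}^\beta))$.

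In regime 1 ($\ee\ge u_{\min}^{\beta'}$, so $u_{\min}=u_1$) both contributions are dominated by $u_1^{\beta+1}$, matching the first term $\ee^{\alpha(\beta+1)/(2\alpha+d)}(\lambda^{d/\alpha}\log/n_0)^{\alpha(\beta+1)/(2\alpha+d)}$; in regime 2 ($\ee<u_{\min}^{\beta'}$, so $u_{\min}=u_2$) both contributions are dominated by $u_2^{\beta'+1}$, matching the second term $(\lambda^{d/\alpha\vee\beta'}\log/n_0)^{\alpha(\beta'+1)/(2\alpha+d-\alpha\beta')}$, noting that $d/\alpha\vee\beta'=d/\alpha$ under $\alpha\beta'\le d$. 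Summing across the two regimes and absorbing the $L$ and log factors from the union bound into the constant $C_5$ gives the claimed inequality. The main obstacle I anticipate is the simultaneous use of $\beta'$ on $G$ and of the $\min(\ee,\cdot)$ bound on $B$ together with the two-regime solution of the budget equation, so that the two estimates match cleanly across the transition $\ee\sim u_{\min}^{\beta'}$; this accounting, rather than any single estimate, is the delicate step of the argument.
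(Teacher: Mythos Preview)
Your proposal is correct and follows essentially the same route as the paper: the good event via Hoeffding plus a union bound, the two correctness facts (Bayes labels survive; every survivor is within $10\lambda r^\alpha$ of the top), the active-cell count $|\cA_r|\le (\ee+C_\beta(\lambda r^\alpha)^{\beta'})/(c_d r^d)$ via strong density and RMC, solving the budget equation for $r_{\min}$ as a maximum of two candidates, and finishing with a two-regime excess bound.

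The only noteworthy difference is your $G/B$ split of the excess into $\{\ms'>0\}$ versus $\{\ms'=0\}$. The paper skips this: it bounds the whole excess by $P_X(\{\ms\le\Delta_{r_{\min}}\})\cdot\Delta_{r_{\min}}$, then in the $\ee$-large regime applies TMC directly, and in the $\ee$-small regime uses the inclusion $\{\ms\le\tau\}\subset\{\ms'\le\tau\}$ together with RMC and absorbs $\ee\lesssim u_{\min}^{\beta'}$. Your decomposition reaches the same place but implicitly uses $\ee=P_X(\ms'=0)$ to obtain $P_X(0<\ms'\le\tau)\le C_\beta\tau^{\beta'}$; Assumption~\ref{asmp:general_tsy} as stated only gives $P_X(\ms'\le\tau)\le\ee+C_\beta\tau^{\beta'}$, so strictly speaking this step needs either that identification (which the paper's remark makes) or, in regime~1, the simpler observation that on $G$ one has $\ms=\ms'$ and hence TMC already yields $P_X(G\cap\{\ms\le\tau\})\le C_\beta\tau^\beta$. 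Either fix is immediate and does not change the rate.
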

\begin{proof}
    Under some favorable event $\xi_\as$ with probability at least $1-\delta_0$ (Lemma A.1), the following holds: 
    
    $\bullet$ Algorithm~\ref{alg:main} will reach level 
        \begin{align*}
        r_\min &\le \max\left\{\left(\frac{c_7\lambda^{-2}\ee\log \left(\frac{4L\lambda^{2} n_0}{\delta_0}\right)}{n_0}   \right)^{\frac{1}{2\alpha+d}}, \right.\\
        & \quad \left.\left(\frac{c_7\lambda^{\beta'-2}\log \left(\frac{4L\lambda^{2} n_0}{\delta_0}\right)}{n_0}   \right)^{\frac{1}{2\alpha+d-\alpha\beta'}}  \right\}\\
        & \doteq \max\{Q_1, Q_2\};
    \end{align*}
    for some $c_7 > 0$ (Lemma A.4);
    
    $\bullet$ Algorithm~\ref{alg:main} never eliminates Bayes labels (Lemma A.2);
    
    $\bullet$ $\forall \cC \in \mathscrbf{C}_{r_0}, \forall x \in \cC, \forall y \in S_\cC$, $\eta_{(1)}(x) - \eta_y(x) \le 10 \lambda r_\min^\as$, and $S_\cC$ contains only Bayes labels in regions where $\ms > 10 \lambda r_\min^\as$ (Lemma A.3);
    
    When  $Q_1 \le Q_2$, $\ee + C_\bs r_\min^{\as\beta'} \le c_8 r_\min^{\as\beta'}$ for some $c_8>0$, 
    \begin{align*}
        \cE{(\hat{h}_{n_0,\alpha})} &\le P_X(\{x: \mh(x) \le 10\lambda r_\min^\as\}) (10\lambda r_\min^\as)\\
        & \le C'_5 \left(\frac{ \lambda^{\frac{d}{\alpha}\vee\beta'} \log \left(\frac{4L\lambda^{2} n_0}{\delta_0}\right)}{ n_0}\right)^{\frac{\alpha(\beta'+1)}{2\alpha+d-\alpha\beta'}},
    \end{align*}
    for some $C'_5 > 0$. When $Q_1 > Q_2$,
    \begin{align*}
        \cE{(\hat{h}_{n_0,\alpha})} 
        &\le P_X(\{x: \ms(x) \le 10\lambda r_\min^\as\}) (10\lambda r_\min^\as)\\
        & \le C''_5 \ee^{\frac{\alpha(\beta+1)}{2 \alpha+d}}  
         \left(\frac{\lambda^{\frac{d}{\alpha}} \log \left(\frac{4L\lambda^{2} n_0}{\delta_0}\right)}{ n_0}\right)^{\frac{\alpha(\beta+1)}{2 \alpha+d}}.
    \end{align*}
    for some $C''_5>0$. We then conclude the proof by choosing $C_5 = \max\{C_5',C_5''\}$.
\end{proof}


{\bf Outline of Proof for Theorem~\ref{thm:aggregation_strong_density} and \ref{thm:aggregation}.} The \emph{Correctness of aggregation} relies on the fact that Algorithm~\ref{alg:meta} a) never adds back removed labels, and b) stops aggregating labels when all labels are about to be removed from a cell --  this ensures the final candidate set $\mathcal{L}_\cC$ contains no bad labels and is non-empty. By Proposition~\ref{prop:excess_risk_bound_soft_margin}, we have  excess risk bounds for all $\alpha_i \le \alpha$, among which the largest one satisfies $\alpha-\alpha_{i} \leq 1/\lfloor\log(n)\rfloor^3$. Direct calculation shows that the excess risk bound with $\alpha_{i^*}$ is only a constant factor away from the one with $\alpha$.

\section{CONCLUSION}

In this paper, we have shown that simple nuances in notions of margin---seemingly having to do with uniqueness of the Bayes classifier---affect whether any active learner can gain over passive learning. Our main result is the lower bound (Theorem~\ref{thm:lower}), which requires proof techniques quite different from the usual lower bounds arguments in active learning, e.g. \cite{Minsker12}, \cite{Locatelli-et-al17}. We also show that savings remain possible in the worst case over $P_X$, and also under a refined margin condition in regimes with small sampling budget. 

Our main Theorem \ref{thm:lower} is shown here for the binary case, which does not distinguish between uniqueness of the Bayes and all labels being equivalent; as such it leaves open the possibility of a more refined picture in the case of multiple labels, i.e., whether allowing multiple labels (but not all) to be equivalent is enough to preclude savings over passive learning. 

Finally, while our results concern the nonparametric setting of active learning, it remains open whether similar nuances in achievable rates occur in parametric settings with bounded VC classes. 

\subsubsection*{Acknowledgements}
 The three authors, Samory Kpotufe, Gan Yuan, Yunfan Zhao, are listed in alphabetical order. Samory Kpotufe acknowledges support under a Sloan fellowship, and NSF Grant Id 1739809. He is also a visiting faculty at Google AI Princeton.

\newpage

\bibliography{refs.bib}


\clearpage
\appendix

\thispagestyle{empty}

\onecolumn \makesupplementtitle

\section{Proof of the Theorem 2 and 3}
To begin with, we define some quantities and notions that will be used in the lemmas. 

\begin{definition}
    \label{def:cell_loss}
    Let $A$ be any measurable subset of $\cX$ and $y \in [L]$. We define the regression function in $A$ for label $y$ as $\eta_y(A) \doteq \left[\int_A \eta_y(x) dx\right] / \left[\int_A dx\right]$. 
\end{definition}
Given $n_A$ independent samples $\{(X^A_j, Y_j)\}_{j=1}^{n_A}$ in $A$, an unbiased estimator of $\eta_y(A)$ is $$\hat{\eta}_y(A) \doteq \frac{1}{n_A}\sum_{i=1}^{n_A} \id(Y_i = y). $$

To get the high probability bound, we focus the discussion on a subset under which the estimation error of $\hat{\eta}$ at each cell is small throughout the proof. We consider a favorable event $\xi_\alpha \doteq \bigcap_{r \in \cI_r, \cC \in \Cr } \xi_{\cC, r, \alpha}$, where
\begin{align*}
    \cI_r & \doteq \{1/2, 1/4, \ldots, r_{\min}, r_{\min}/2\}, \\
    \xi_{\cC, r,\alpha} & \doteq \left\{\left\lVert\hat{\eta}(\cC)-\eta(\cC)\right\rVert_\infty \leq {\lambda r^\alpha}\right\}.
\end{align*}

The following lemma shows that $\xi_\alpha$ is indeed a high probability event. 

\vspace{0.2cm}
\begin{lemma}\label{lem:high_prob}
    $\mathbb{P}(\xi_\alpha) \geq 1- \delta_0$.
    \begin{proof}
    By Hoeffding's inequality, for each $y \in [L]$,
    $$
    \p\left(|\hat\eta_y(\cC)-\eta_y(\cC)|\geq \lambda r^\alpha\right)\leq \frac{ \delta_0 r^{d+1}}{L}.
    $$
    By union bound, $\p(\xi_{\mathscrbf{C},r, \alpha}) \geq 1 - \sum_{y = 1}^L (\delta_0 r^{d+1})/L = 1 - \delta_0 r^{d+1}$. Another application of union bound yields $\p(\xi_\alpha) \geq 1 -  \sum_{r\in \cI_r}r^{-d}  \delta_0 r^{d+1}\ge 1 - \delta_0$. 
    \end{proof}
\end{lemma}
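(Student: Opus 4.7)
The plan is to decompose $\xi_\alpha$ into its defining atoms $\xi_{\cC,r,\alpha}$ and apply Hoeffding's inequality cell-by-cell, then close the argument with a three-layer union bound over labels, cells, and levels. The whole proof hinges on the fact that the per-cell sample size $n_{r,\alpha}$ from \eqref{eqn:n_r} has been calibrated precisely so that the union bounds telescope down to $\delta_0$.

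First, I would fix a level $r \in \cI_r$, a cell $\cC \in \Cr$, and a label $y \in [L]$. Since Algorithm~\ref{alg:main} draws $n_{r,\alpha}$ fresh samples from the uniform distribution on $\cC$ (independent across cells and across levels), the indicators $\id(Y_i^\cC = y)$ for $i = 1, \ldots, n_{r,\alpha}$ are i.i.d.\ Bernoulli with mean $\eta_y(\cC)$ (using Definition~\ref{def:cell_loss}), and $\hat\eta_y(\cC)$ is their empirical mean. Hoeffding's inequality then yields
$$\p\bigl(|\hat\eta_y(\cC) - \eta_y(\cC)| \ge \lambda r^\alpha\bigr) \;\le\; 2 \exp\bigl(-2 n_{r,\alpha}(\lambda r^\alpha)^2\bigr),$$
and plugging in $n_{r,\alpha} = 2 \log(2L/(\delta_0 r^{d+1})) / (\lambda r^\alpha)^2$ from \eqref{eqn:n_r} makes the right-hand side collapse exactly to $\delta_0 r^{d+1}/L$.

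Next, I would union-bound over the $L$ labels in $\cC$ to get $\p(\xi_{\cC,r,\alpha}^c) \le \delta_0 r^{d+1}$, then over the $|\Cr| = r^{-d}$ cells at level $r$ to obtain failure probability at most $\delta_0 r$ at level $r$. Finally, a union bound over $r \in \cI_r \subseteq \{2^{-k} : k \ge 1\}$ gives total failure probability at most $\sum_{r \in \cI_r} \delta_0 r \le \delta_0 \sum_{k \ge 1} 2^{-k} = \delta_0$, so $\p(\xi_\alpha) \ge 1 - \delta_0$ as claimed. There is no real obstacle in this argument beyond bookkeeping; the content of the lemma is effectively wired into the definition \eqref{eqn:n_r} of $n_{r,\alpha}$, whose three factors ($L$ in the numerator, $r^{d+1}$ in the denominator, and $\log$ versus $(\lambda r^\alpha)^2$) are exactly what the three successive union bounds (over labels, cells, and dyadic levels) consume.
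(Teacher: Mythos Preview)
Your proposal is correct and follows the same approach as the paper: Hoeffding per label, then union bounds over labels, cells, and dyadic levels. The only minor quibble is your phrase ``collapse exactly'': with $n_{r,\alpha}$ as in \eqref{eqn:n_r} one gets $2\exp(-2n_{r,\alpha}(\lambda r^\alpha)^2)=2(\delta_0 r^{d+1}/(2L))^4$, which is strictly smaller than $\delta_0 r^{d+1}/L$, so the bound holds with slack rather than equality, but this does not affect the argument.
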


Next, we show some desired properties of Algorithm 2 on the favorable event $\xi_\alpha$. In particular, Lemma~\ref{lem:non_active_region} shows that, Algorithm 2 never eliminate Bayes labels;  Lemma~\ref{lem:no_mistake_soft_margin} shows that Algorithm 2 predicts only Bayes labels in the area where soft margin is large enough; Lemma~\ref{lemma:r_min} shows that the algorithm will at least reach some certain level $r_\min$ of partition.  

\vspace{0.2cm}

\begin{lemma}\label{lem:non_active_region}
On the event $\xi_\alpha$, suppose that Algorithm 2 is in the depth that the partition is of sidelength $r$. For any $x \in \cX$, we have $\eta_y(x)< \eta_{(1)}(x)$ 
for any $y \not\in S_\cC$, where $x \in \cC \in \Cr$. That is, the algorithm never eliminate Bayes labels. 
    \begin{proof}
    For any $y \in  [L]$, by definition of $\xi_\alpha$ and smoothness assumption, we have 
    $$|\hat\eta_y(\cC)-\eta_y(\cC)|\leq ||\hat\eta(\cC)-\eta(\cC)||_{\infty}\leq \lambda r^\alpha;$$
    $$|\eta_y(x)-\eta_y(\cC)|\leq ||\eta(x)-\eta(\cC)||_{\infty}\leq \lambda r^\alpha.$$
     
    By the algorithm design,  $\hat\eta_{(1)}(x)-\hat\eta_y(x)\geq 6\lambda r^\alpha.$
    Therefore,
    $\eta_{(1)}(x)-\eta_y(x)\geq \hat\eta_{(1)}(\cC)-\hat\eta_y(\cC)-4\lambda r^\alpha > 0.$
    \end{proof}
\end{lemma}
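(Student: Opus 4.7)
The plan is to show that the threshold $\tau_{r,\alpha}=6\lambda r^\alpha$ in the elimination rule is deliberately chosen larger than twice the sum of a ``Hölder-smoothing'' error and an ``estimation'' error, so that every eliminated label $y$ really is beaten by the Bayes label at every point of the cell. Concretely, for a fixed dyadic cell $\cC \in \Cr$ and a label $y \notin S_\cC$, I first work on the event $\xi_\alpha$ to show that the cell-average gap $\eta_{y^\ast}(\cC) - \eta_y(\cC)$, where $y^\ast$ attains $\hat\eta_{(1)}(\cC)$, is bounded below by $4\lambda r^\alpha$, then transfer this bound pointwise to each $x \in \cC$ via Hölder continuity.

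For Step 1, if $y$ is eliminated at level $r$ inside $\cC$, the rule in Algorithm~\ref{alg:main} enforces $\hat\eta_{(1)}(\cC) - \hat\eta_y(\cC) \geq 6\lambda r^\alpha$; on $\xi_\alpha$ we have $|\hat\eta_{y'}(\cC) - \eta_{y'}(\cC)| \leq \lambda r^\alpha$ for every label $y'$, so in particular $\eta_{y^\ast}(\cC) - \eta_y(\cC) \geq 6\lambda r^\alpha - 2\lambda r^\alpha = 4\lambda r^\alpha$. For Step 2, the sup-norm diameter of a dyadic cell at level $r$ is at most $r$, so the $(\lambda,\alpha)$-Hölder hypothesis, together with the cell-averaged definition of $\eta_{y'}(\cC)$, yields $|\eta_{y'}(x) - \eta_{y'}(\cC)| \leq \lambda r^\alpha$ for every $x\in \cC$ and every $y'$ (by integrating the pointwise Hölder bound over $\cC$). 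Chaining these gives $\eta_{y^\ast}(x) - \eta_y(x) \geq 4\lambda r^\alpha - 2\lambda r^\alpha = 2\lambda r^\alpha > 0$, hence $\eta_{(1)}(x) \geq \eta_{y^\ast}(x) > \eta_y(x)$ for every $x \in \cC$, as claimed.

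Finally, since candidate label sets only shrink as the algorithm descends through dyadic refinements, the case where $y$ was already eliminated at a coarser ancestor $\cC' \supset \cC$ of sidelength $r' \geq r$ is handled by running the identical two-step argument at level $r'$: the conclusion $\eta_{(1)}(x') > \eta_y(x')$ for all $x' \in \cC'$ restricts to $\cC$ trivially, and in fact provides a larger slack $2\lambda (r')^\alpha$ at the coarser scale. I do not foresee a real obstacle here beyond matching constants; the one accounting item worth double-checking is that the estimation slack (via $\xi_\alpha$) and the Hölder slack each contribute exactly $2\lambda r^\alpha$ when summed over the two labels $y$ and $y^\ast$, which is precisely why the designer picked the threshold $6\lambda r^\alpha$ and not something smaller.
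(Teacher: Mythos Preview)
Your proposal is correct and follows essentially the same approach as the paper: both use the event $\xi_\alpha$ for the estimation error $\lambda r^\alpha$, the H\"older condition for the smoothing error $\lambda r^\alpha$, and the elimination threshold $6\lambda r^\alpha$ to conclude $\eta_{(1)}(x)-\eta_y(x)\ge 2\lambda r^\alpha>0$. Your explicit introduction of $y^\ast=\argmax_{y'}\hat\eta_{y'}(\cC)$ and your closing paragraph handling labels eliminated at a coarser ancestor make the argument slightly more complete than the paper's terse version, but the underlying idea is identical.
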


\begin{lemma} \label{lem:no_mistake_soft_margin}
    On the event $\xi_{\alpha}$, suppose that Algorithm 2 is in the depth that the partition is of side length $r$. If 
    $\eta_{(1)}(x)-\eta_y(x)\geq \Delta_r=10\lambda r^\alpha$    
    for some $x \in \cX$ and $y \in [L]$, then for the cell $\cC \in \Cr$ that contains $x$, the label $y$ will be eliminated. Consequently, for any $x \in \cX$ with $\ms(x)> \Delta_r$, $S_\cC$ contains only Bayes labels. 
    
    \begin{proof}  
    For any $y \in [L]$, by Assumption~1, $\eta_{(1)}(x)-\eta_y(x)\geq \eta_{(1)}(\cC)-\eta_y(\cC)-2\lambda r^\alpha.$
	By the definition of $\xi_\alpha$,  we have 
	$|\eta_y(\cC)-\hat\eta_y(\cC)|\leq \lambda r^\alpha$, and hence
	\begin{align*}
	    \hat\eta_{(1)}(\cC)-\hat\eta_y(\cC)  \geq &
	    |\eta_{(1)}(\cC)-\eta_y(\cC)|-|\eta_y(\cC)-\hat\eta_y(\cC)|-|\eta_{(1)}(\cC)-\hat\eta_{(1)}(\cC)|\\
	    \geq & |\eta_{(1)}(\cC)-\eta_y(\cC)|-2\lambda r^\alpha\\
	    \ge & 6\lambda r^\alpha\,.
	\end{align*}
    
    \end{proof}
\end{lemma}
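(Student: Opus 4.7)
\textbf{Proof proposal for Lemma \ref{lem:no_mistake_soft_margin}.} The plan is to chain three small inequalities: (a) a smoothness bridge between the value of $\eta_y$ at the point $x$ and the cell-average $\eta_y(\cC)$, (b) the event-based bridge $\xi_\alpha$ between $\eta_y(\cC)$ and its estimate $\hat\eta_y(\cC)$, and (c) a comparison between the (pointwise) $\eta_{(1)}(x)$ and the (cell-level) $\hat\eta_{(1)}(\cC)$. Adding these up should show the empirical gap exceeds the elimination threshold $\tau_{r,\alpha}=6\lambda r^\alpha$ exactly when the true gap exceeds $10\lambda r^\alpha$, i.e.\ giving a slack of $4\lambda r^\alpha$ that accounts for four $\lambda r^\alpha$ perturbations.

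First, by Definition \ref{asmp:smoothness} of $(\lambda,\alpha)$-H\"older continuity, for any $x,x'\in\cC$ we have $|\eta_y(x)-\eta_y(x')|\le\lambda r^\alpha$, since $\|x-x'\|_\infty\le r$. Averaging over $x'\in\cC$ gives $|\eta_y(x)-\eta_y(\cC)|\le\lambda r^\alpha$ for every label $y$. Second, on $\xi_\alpha$ we have $|\hat\eta_y(\cC)-\eta_y(\cC)|\le\lambda r^\alpha$ for all $y$. Third, for the specific label $y^\star(x)\in\argmax_y \eta_y(x)$ achieving $\eta_{(1)}(x)$, combining the first two bounds gives
\[
\hat\eta_{y^\star(x)}(\cC)\ge \eta_{y^\star(x)}(x)-2\lambda r^\alpha = \eta_{(1)}(x)-2\lambda r^\alpha,
\]
so $\hat\eta_{(1)}(\cC)\ge\hat\eta_{y^\star(x)}(\cC)\ge\eta_{(1)}(x)-2\lambda r^\alpha$. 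Symmetrically, $\hat\eta_y(\cC)\le \eta_y(x)+2\lambda r^\alpha$. Subtracting gives
\[
\hat\eta_{(1)}(\cC)-\hat\eta_y(\cC)\ge \bigl(\eta_{(1)}(x)-\eta_y(x)\bigr)-4\lambda r^\alpha \ge 10\lambda r^\alpha-4\lambda r^\alpha = 6\lambda r^\alpha = \tau_{r,\alpha},
\]
which is precisely the elimination criterion used in Algorithm \ref{alg:main}. Hence $y$ is removed from $\mathcal{L}^\alpha_\cC$ at this level (or has been removed at a coarser level already; by Lemma \ref{lem:non_active_region}, once removed, $y$ is never re-added).

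For the second assertion, fix $x$ with $\ms(x)>\Delta_r=10\lambda r^\alpha$. By definition of $\ms$, every non-Bayes label $y$ (any $y$ with $\eta_y(x)\neq\eta_{(1)}(x)$) satisfies $\eta_{(1)}(x)-\eta_y(x)>\Delta_r$, so the first part of the lemma applies to each such $y$ and all of them are eliminated; the remaining labels all satisfy $\eta_y(x)=\eta_{(1)}(x)$, i.e.\ they are Bayes labels. I do not expect any real obstacle here: the main subtlety is that the index achieving $\eta_{(1)}(x)$ need not be the same as the one achieving $\hat\eta_{(1)}(\cC)$, which is why one must pass through the pointwise $\argmax$ label $y^\star(x)$ rather than arguing label-by-label on both sides. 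Once that substitution is set up, the rest is arithmetic.
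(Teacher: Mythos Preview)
Your proof is correct and follows essentially the same two-step bridge as the paper (smoothness from $x$ to the cell average, then $\xi_\alpha$ from the cell average to the empirical estimate, losing $2\lambda r^\alpha$ at each step). If anything, your version is cleaner: you explicitly route through the pointwise maximizer $y^\star(x)$ to control $\hat\eta_{(1)}(\cC)$, whereas the paper's first displayed inequality has the direction reversed (a typo) and leaves the $\argmax$ mismatch implicit; you also spell out the ``Consequently'' clause, which the paper omits.
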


\begin{lemma}\label{lemma:r_min}
On the event $\xi_\as$,
\begin{enumerate}[i)]
    \item Under Assumption 1 and 2, then the finest partition Algorithm 2 can reach satisfies
    \begin{align*}
        r_\min \le \left(\frac{c_6\lambda^{-2}\log \left(\frac{4 L  \lambda^{2} n_0}{\delta_0}\right)}{n_0} \right)^{1/(2\alpha+d)};
    \end{align*}
    for some $c_6 > 0$;
    \item Under Assumption 1 and 2, and assume further that strong density condition holds for $c_d > 0$, then
    \begin{align*}
        r_\min \le \max\left\{\left(\frac{c_7\lambda^{-2}\ee\log \left(\frac{4L\lambda^{2} n_0}{\delta_0}\right)}{n_0}   \right)^{\frac{1}{2\alpha+d}},      \left(\frac{c_7\lambda^{\beta'-2}\log \left(\frac{4L\lambda^{2} n_0}{\delta_0}\right)}{n_0}   \right)^{\frac{1}{2\alpha+d-\alpha\beta'}}   \right\},
    \end{align*}
    for some $c_7 > 0$.
\end{enumerate}
\begin{proof}
\begin{itemize}
    \item[i)] The total budget is not sufficient for a finer partition than length $r_{\min}$, hence
    \begin{align*}
    n_0&\leq \sum_{r\in \cI_r} |\mathcal{A}_r|n_r  \leq \sum_{r\in \cI_r}r^{-d} \cdot \frac{2\log(2L/\delta_0 r^{d+1})}{\lambda^2 r^{2\alpha}} \\
    &\le \frac{2(d+1)\log 2}{\lambda^{2}}\log(2L/(\delta_0 r_{min}^{d+1})) \sum_{r\in \cI_r} r^{-(2\alpha+d)} \\
    &\le \frac{2(d+1)\log 2}{\lambda^{2}}\log(2L/(\delta_0 r^{d+1}_{\min})) \left(\frac{r_{\min}^{-(2\alpha+d)}4^{2\alpha+d}}{2^{2\alpha+d}-1}\right)\\
    &\le \frac{4(d+1)\log 2}{\lambda^{2}} \log(2L/(\delta_0 r^{d+1}_{\min}))\left(\frac{r_{\min}^{-(2\alpha+d)}4^{2\alpha+d}}{2\alpha+d}\right),
    \end{align*}
    
    where the last equality is from the inequality $2^u - 1 \ge \frac{u}{2}$ for $u\in \mathbb{R}^+$. 
    We now prove an upper bound on $\log(2L/(\delta_0 r^{d+1}_{\min}))$. Use the trivial bound
    $$
    \frac{\log \left(2L/(\delta_0 r^{d+1}_{\min})\right)}{2 \lambda^{2} r_{\min}^{2 \alpha}} = n_{r_\min} \leq n_0
    $$
    and $\delta_0 r_{\min}^{d+1}<\delta_0/2 \leq 2e^{-1}$, we have
    $$
    \frac{\log(L)}{2\lambda^2 r_{\min}^{2\alpha}}\leq n_0
    $$
    which implies
    $$r_{\min}\geq \left(\frac{\log(L)}{2\lambda^2 n_0}\right)^{1/2\alpha}
    $$
    and therefore
    \begin{align}\label{eqn:log_bound}
        \log(2L/(\delta_0 r^{d+1}_{\min}))&\leq \log\left(\frac{2L}{\delta_0}\left(\frac{2\lambda^2n_0}{\log(L)}\right)^{(d+1)/2\alpha}\right)
        \leq \frac{d+1}{2\alpha} \log\left(\frac{4 L\lambda^2 n_0}{\delta_0}\right)
    \end{align}
    With this upper bound on $\log(L/(\delta_0 r^{d+1}_{\min}))$, we now proceed to upper bound $r_{\min}$. Clearly,
    \begin{align*}
    n_0&\le
    \frac{c_6}{\lambda^{2}} \log \left(\frac{4 L  \lambda^{2} n_0}{\delta_0}\right) r_{\min}^{-(2\alpha+d)}
    \end{align*}
    where $c_6 =  \frac{4(d+1)^2 4^{2\alpha+d} \log 2}{2 \alpha(2\alpha + d)}$. Therefore, 
    $$
    r_{\min}\le \left(\frac{c_6}{\lambda^{2}n_0} \log \left(\frac{4 L  \lambda^{2} n_0}{\delta_0}\right)\right)^{1/(2\alpha+d)}.
    $$
    \item[ii)] From the strong density condition and Lemma~\ref{lem:no_mistake_soft_margin}, we have a tighter bound on the number of active cells:
    $$|\mathcal{A}_r| \leq \frac{\ee + C_\bs (6\lambda r^\alpha)^{\beta'}}{c_d r^d}.$$
    Using similar argument as in i), we have
    \begin{align*}
        n_0&\leq \sum_{r\in \cI_r} |\mathcal{A}_r|n_r \\
        &\leq \sum_{r\in \cI_r}\frac{\ee + C_\bs (6\lambda r^\alpha)^{\beta'}}{c_d r^d}\cdot \frac{2\log(2L/\delta_0 r^{d+1})}{\lambda^2 r^{2\alpha}} \\
        &\le \frac{4(d+1)\log 2}{c_d\lambda^2} \log(2L/(\delta_0 r^{d+1}_{\min})) \left(\ee\frac{r_{\min}^{-(2\alpha+d)}4^{2\alpha+d}}{2\alpha+d}+ C_\beta {(6\lambda)}^{\beta'}  \frac{r_{\min}^{-(2\alpha+d-\alpha\beta')}4^{2\alpha+d-\alpha\beta'}}{2\alpha+d-\alpha\beta'}\right)\\
        &\le c_7 \lambda^{-2}\log \left(\frac{4L  \lambda^{2} n_0}{\delta_0}\right)  \max\left\{\ee{r_{\min}^{-(2\alpha+d)}}, \lambda^{\beta'} {r_{\min}^{-(2\alpha+d-\alpha\beta')}}\right\}.     
    \end{align*}
    where $c_7 =  \frac{4(d+1)^2 4^{2\alpha+d} \log 2}{c_d \alpha{(2\alpha+d-\alpha\beta')}}\max\{1,C_\beta 6^{\beta'}\}$, and the last step is from \eqref{eqn:log_bound}. Therefore,
    \begin{align*}
        r_{\min}\le  \max&\left\{\left(\frac{c_7 \lambda^{-2}\ee\log \left(\frac{4L  \lambda^{2} n_0}{\delta_0}\right)}{n_0}   \right)^{\frac{1}{2\alpha+d}}, \left(\frac{c_7 \lambda^{\beta'-2}\log \left(\frac{4L\lambda^{2} n_0}{\delta_0}\right)}{n_0}   \right)^{\frac{1}{2\alpha+d-\alpha\beta'}}   \right\}.
    \end{align*}

\end{itemize}

\end{proof}
\end{lemma}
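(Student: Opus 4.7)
The plan is to exploit the fact that Algorithm~\ref{alg:main} terminates precisely when its budget $n_0$ is consumed. If it descends through levels $r \in \cI_r = \{1/2, 1/4, \ldots, r_{\min}\}$, then summing the per-level query cost gives the constraint
\begin{align*}
n_0 \;\geq\; \sum_{r \in \cI_r} |\cA_r|\, n_{r,\alpha},
\end{align*}
with $n_{r,\alpha}$ as in \eqref{eqn:n_r}. My strategy is to upper-bound $|\cA_r|$ differently for each part, substitute, and invert the inequality into an upper bound on $r_{\min}$.

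For part (i), I would use only the trivial bound $|\cA_r|\le r^{-d}$. After plugging in $n_{r,\alpha}$, factoring the slowly varying log, and recognizing that $\sum_{r\in \cI_r} r^{-(2\alpha+d)}$ is a geometric series dominated up to constants by its last term $r_{\min}^{-(2\alpha+d)}$, this yields an inequality of the form $n_0 \lesssim \lambda^{-2}\log(2L/(\delta_0 r_{\min}^{d+1}))\, r_{\min}^{-(2\alpha+d)}$. For part (ii), the active-set count must be tightened: by Lemma~\ref{lem:no_mistake_soft_margin}, any $\cC \in \cA_r$ must be ``close'' to the soft-margin region $\{x:\ms(x)\le 10\lambda r^\alpha\}$ (otherwise all non-Bayes labels would already be eliminated), and Assumption~\ref{asmp:general_tsy} bounds the $P_X$-mass of that region by $\ee + C_\beta(6\lambda r^\alpha)^{\beta'}$. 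The strong density condition then forces $|\cA_r| \le (\ee + C_\beta(6\lambda r^\alpha)^{\beta'})/(c_d r^d)$. The budget sum accordingly splits into two geometric pieces, one scaling as $\ee\, r^{-(2\alpha+d)}$ and the other as $\lambda^{\beta'} r^{-(2\alpha+d-\alpha\beta')}$, each dominated by its $r_{\min}$ term.

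The main obstacle I anticipate is the implicit dependence of the log factor $\log(2L/(\delta_0 r_{\min}^{d+1}))$ on $r_{\min}$ itself, which appears after substituting $n_{r_{\min},\alpha}$. I plan to decouple this via a bootstrapping step: from the trivial inequality $n_{r_{\min},\alpha}\le n_0$ (only one cell at the finest level needs to fit in the budget) one gets $r_{\min}\gtrsim(\log L/(\lambda^2 n_0))^{1/(2\alpha)}$, and substituting this crude lower bound back into the log yields $\log(2L/(\delta_0 r_{\min}^{d+1})) \lesssim \log(4L\lambda^2 n_0/\delta_0)$. At that point the two budget inequalities are algebraic in $r_{\min}$ and can be solved directly, giving the bound in part (i) and, after bundling the two contributions with a $\max$, the bound in part (ii). Beyond that, the remaining work is bookkeeping: collecting constants from the geometric factor $4^{2\alpha+d}/(2\alpha+d)$ (and its $\beta'$ analogue), the strong density constant $c_d$, and the margin constants $C_\beta, 6^{\beta'}$ into the single absolute constants $c_6, c_7$ announced in the statement.
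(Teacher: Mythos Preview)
Your plan matches the paper's proof essentially step for step: the budget inequality, the trivial $|\cA_r|\le r^{-d}$ bound for (i), the refined $|\cA_r|\le (\ee + C_\beta(6\lambda r^\alpha)^{\beta'})/(c_d r^d)$ bound for (ii) via strong density plus Lemma~\ref{lem:no_mistake_soft_margin}, the geometric-series domination by the $r_{\min}$ term, and the bootstrapping of the log factor through $n_{r_{\min},\alpha}\le n_0$.

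One slip to fix: your first displayed inequality has the wrong direction. The constraint $n_0 \ge \sum_{r\in\cI_r}|\cA_r|n_{r,\alpha}$ over the levels actually processed would only yield a \emph{lower} bound on $r_{\min}$. What you need (and what you in fact write in the next paragraph as $n_0 \lesssim \ldots$) is that the budget was \emph{insufficient} to complete the next level $r_{\min}/2$, so $n_0 \le \sum_{r\in\{1/2,\ldots,r_{\min},\,r_{\min}/2\}}|\cA_r|n_{r,\alpha}$; the paper's $\cI_r$ includes $r_{\min}/2$ for exactly this reason. Also, for (ii) the relevant margin is the \emph{sharp} margin $\ms'$ (a cell stays active only if $\ms'\le \tau_{r,\alpha}$ somewhere inside), which is what gives access to the $\ee + C_\beta\tau^{\beta'}$ bound in Assumption~\ref{asmp:general_tsy}; your text says $\ms$.
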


Now we prove rates for Algorithm 2. The proposition below is a generalized version of Proposition 5, and it includes rates under strong density condition.

\begin{proposition}[Guarantees for Algorithm 2] \label{prop:excess_risk_bound_soft_margin} 

Let $n_0 \in \N$ and $\alpha\beta' \le d$. Let $\{S_\cC\}_{\cC \in r_0}$ be the outputs of Algorithm 2 with input $n_0$, $\lambda$, $\alpha$ and $\delta_0 \in (0,1)$, and $\hat{h}_{n_0,\alpha}$ be any classifier that satisfies $\hat{h}_{n_0,\alpha}(x)\in S_\cC,\forall x \in \cC \in \mathscrbf{C}_{r_0}$. Under Assumption 1 and 2,
\begin{enumerate}[i)]
    \item With probability at least $1-\delta_0$, 
    \begin{align*}
        \mathcal{E} \left(\hat{h}_{n_0,\alpha}\right)
        \leq& \ C_4\left(\frac{ \lambda^{\frac{d}{\alpha}} \log \left(\frac{4L  \lambda^{2} n_0}{\delta_0}\right)}{ n_0}\right)^{\frac{\alpha(\beta+1)}{2 \alpha+d}}
    \end{align*}
    \item Suppose further that strong density condition holds with some $c_d > 0$, then with probability at least $1-\delta_0$,
        \begin{align*}
        \mathcal{E} \left(\hat{h}_{n_0,\alpha}\right)
        \leq \ &C_5\left(
        \ee^{\frac{\alpha(\beta+1)}{2 \alpha+d}}  
        \left(\frac{\lambda^{\frac{d}{\alpha}} \log \left(\frac{4L \lambda^{2} n_0}{\delta_0}\right)}{ n_0}\right)^{\frac{\alpha(\beta+1)}{2 \alpha+d}} + \left(\frac{ \lambda^{\frac{d}{\alpha}\vee\beta'} \log \left(\frac{4L \lambda^{2} n_0}{\delta_0}\right)}{ n_0}\right)^{\frac{\alpha(\beta'+1)}{2 \alpha+d-\alpha \beta'}} 
        \right)
    \end{align*}
\end{enumerate}
for some constant $C_4, C_5 > 0$, which are independent of $n_0, \lambda, L, \ee$ and $\delta_0$. 
\end{proposition}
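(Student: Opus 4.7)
The plan is to work on a high-probability favorable event and reduce the entire argument to a bound on the finest level $r_{\min}$ that Algorithm~2 reaches. I would first invoke Lemma~A.1 to condition on the event $\xi_\alpha = \bigcap_{r,\cC}\{\|\hat\eta(\cC)-\eta(\cC)\|_\infty \leq \lambda r^\alpha\}$, which has probability at least $1-\delta_0$ by Hoeffding and a union bound over all $(\cC,r)$. On this event, Lemma~A.2 guarantees that no Bayes label is ever eliminated, and Lemma~A.3 provides the crucial pointwise control: for every $x \in \cC \in \mathscrbf{C}_{r_0}$ and every surviving $y \in S_\cC$ one has $\eta_{(1)}(x) - \eta_y(x) \leq 10\lambda r_{\min}^\alpha$, while $S_\cC$ retains only Bayes labels whenever $\ms(x) > 10\lambda r_{\min}^\alpha$. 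Thus the pointwise excess error is at most $10\lambda r_{\min}^\alpha$ and it is nonzero only in the small-margin region, so the proof reduces to bounding $r_{\min}$ and integrating a margin tail.

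For part (i), I would invoke Lemma~A.4(i), which yields $r_{\min} \leq (c_6 \lambda^{-2}\log(4L\lambda^2 n_0/\delta_0)/n_0)^{1/(2\alpha+d)}$, and combine it with TMC on $\ms$ to obtain
\[
\cE(\hat h_{n_0,\alpha}) \leq 10\lambda r_{\min}^\alpha \cdot P_X\!\left(\ms \leq 10\lambda r_{\min}^\alpha\right) \leq C_\beta (10\lambda)^{\beta+1} r_{\min}^{\alpha(\beta+1)}.
\]
Plugging in the $r_{\min}$ bound and collecting the $\lambda$-exponents via $(\beta+1)(1-2\alpha/(2\alpha+d)) = (d/\alpha)\cdot\alpha(\beta+1)/(2\alpha+d)$ produces the rate claimed also in Theorem~3.

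For part (ii), I would invoke Lemma~A.4(ii) to get $r_{\min} \leq \max\{Q_1,Q_2\}$, and split by which quantity dominates. When $Q_2 \geq Q_1$, one has $\ee \lesssim r_{\min}^{\alpha\beta'}$, so by the refined margin condition $P_X(\mh \leq 10\lambda r_{\min}^\alpha) \leq \ee + C_\beta(10\lambda r_{\min}^\alpha)^{\beta'} \lesssim r_{\min}^{\alpha\beta'}$, and multiplying by the pointwise error $10\lambda r_{\min}^\alpha$ yields the term with exponent $\alpha(\beta'+1)/(2\alpha+d-\alpha\beta')$. When $Q_1 > Q_2$, TMC on $\ms$ as in part (i) gives $\cE \lesssim (10\lambda r_{\min}^\alpha)^{\beta+1}$, and substituting $r_{\min}\leq Q_1$ extracts the $\ee^{\alpha(\beta+1)/(2\alpha+d)}$ factor. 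Taking $C_5$ to be the larger of the two resulting constants closes the argument.

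The principal new technical obstacle over the binary-label analyses of \cite{Minsker12, Locatelli-et-al17} is the multi-label book-keeping: because Algorithm~2 aggregates an entire set $S_\cC$ of surviving labels rather than collapsing to a single prediction, I must secure the stronger invariant (Lemma~A.3) that \emph{every} $y \in S_\cC$ is pointwise $O(\lambda r_{\min}^\alpha)$-close to Bayes, not merely that some particular remaining label is near-optimal. Beyond this, the careful control of the $\log$ and $\lambda$ factors inside Lemma~A.4 --- in particular, bounding $\log(2L/(\delta_0 r_{\min}^{d+1}))$ against $\log(4L\lambda^2 n_0/\delta_0)$ to avoid a double-log blow-up --- is what produces the clean rates stated in the proposition.
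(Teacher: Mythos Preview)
Your proposal is correct and follows essentially the same route as the paper: condition on the favorable event $\xi_\alpha$ via Lemma~A.1, use Lemmas~A.2--A.3 to reduce the excess risk to $\Delta_{r_{\min}}\cdot P_X(\ms\le\Delta_{r_{\min}})$, and then plug in the two $r_{\min}$ bounds from Lemma~A.4, splitting part~(ii) into the cases $Q_1\le Q_2$ and $Q_1>Q_2$ exactly as the paper does. Your identification of the multi-label book-keeping (every surviving $y\in S_\cC$ is near-Bayes) and of the $\log$ control inside Lemma~A.4 as the main technical points also matches the paper's emphasis.
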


\begin{proof}[\bf Proof of Proposition~\ref{prop:excess_risk_bound_soft_margin}]
\quad \\
\begin{enumerate}[i)]
\item On $\xi_\as$ with probability at least $1-\delta_0$, we have by Part i) of Lemma~\ref{lemma:r_min},
    \begin{align*}
        \Delta_{r_{\min}} &= 10 \lambda r_{\min}^\alpha \le 10\lambda  \left(\frac{c_6\log\left(\frac{4 L\lambda^2n_0}{\delta_0}\right)}{\lambda^2n_0}\right)^{\frac{\alpha}{2\alpha + d}} \le 10 \left(\frac{c_6\lambda^{\frac{d}{\alpha}}\log\left(\frac{4L\lambda^2n_0}{\delta_0}\right)}{n_0}\right)^{\frac{\alpha}{2\alpha + d}}.
    \end{align*}

By Lemma~\ref{lem:no_mistake_soft_margin}, the classifier $\hat{h}_{n_0, \alpha}$ makes no error at $\{x: \ms(x) > \Delta_{r_\min}\}$, and thus 
\begin{align*}
    \cE{(\hat{h}_{n_0,\alpha})} &\le \p_X(\ms(x) \le \Delta_{r_{\min}}) \cdot \Delta_{r_{\min}} \le C_\beta \Delta_{r_{\min}}^{\beta+1} \le C_4 \left(\frac{\lambda^{\frac{d}{\alpha}}\log\left(\frac{4L\lambda^2n_0}{\delta_0}\right)}{n_0}\right)^{\frac{\alpha(\beta+1)}{2\alpha + d}},
\end{align*} 
where $C_4 = C_\beta 10^{\beta+1} c_6^{\frac{\alpha(\beta+1)}{2\alpha + d}}$.
\item On $\xi_\as$ with probability at least $1-\delta_0$, we have by Part ii) of Lemma~\ref{lemma:r_min},
    \begin{align*}
    \Delta_{r_{\min}}  &\le 10 \max \left\{
    \ee^{\frac{\alpha}{2 \alpha+d}}  
    \left(\frac{c_7 \lambda^{\frac{d}{\alpha}} \log \left(\frac{4L\lambda^{2} n_0}{\delta_0}\right)}{ n_0}\right)^{\frac{\alpha}{2 \alpha+d}}, \qquad \left(\frac{c_7 \lambda^{\frac{d}{\alpha}\vee\beta'} \log \left(\frac{4L\lambda^{2} n_0}{\delta_0}\right)}{ n_0}\right)^{\frac{\alpha}{2\alpha+d-\alpha\beta'}} 
    \right\}\\
    & \doteq 10\max\{Q_1, Q_2\}.
    \end{align*}
    {\bf Case 1:  $Q_1 \le Q_2$}
    
    Under this case, it is clear that $\ee \le c_8 \Delta_{r_{\min}}^{\beta'}$ for some $c_8>0$. 
    
    Therefore, 
    \begin{align*}
        \cE{(\hat{h}_{n,\alpha})} &\le \p_X(\ms(x) \le \Delta_{r_{\min}}) \Delta_{r_{\min}}\\
        & \le \p_X(\ms'(x) \le \Delta_{r_{\min}}) \Delta_{r_{\min}} \\
        & \le C_\beta (\ee + \Delta_{r_{\min}}^{\beta'}) \Delta_{r_{\min}}\\
        &\le C_\beta(c_8+1) \Delta_{r_{\min}}^{\beta'+1}\\
        & \le C'_5 \left(\frac{ \lambda^{\frac{d}{\alpha}\vee\beta'} \log \left(\frac{4L\lambda^{2} n_0}{\delta_0}\right)}{ n_0}\right)^{\frac{\alpha(\beta'+1)}{2\alpha+d-\alpha\beta'}},
    \end{align*}
    where $C'_5 = C_\beta(c_8+1) 10^{\beta'+1} c_7^{\frac{\alpha(\beta'+1)}{2\alpha+d-\alpha\beta'}}$.
    
    {\bf Case 2:  $Q_1 > Q_2$}
    
    Under this case, 
    \begin{align*}
        \cE{(\hat{h}_{n_0,\alpha})} 
        &\le \p_X(\ms(x) \le \Delta_{r_{\min}}) \Delta_{r_{\min}}\le C_\beta \Delta_{r_{\min}}^{\beta+1} \le C''_5 \ee^{\frac{\alpha(\beta+1)}{2 \alpha+d}}  
         \left(\frac{\lambda^{\frac{d}{\alpha}} \log \left(\frac{4L\lambda^{2} n_0}{\delta_0}\right)}{ n_0}\right)^{\frac{\alpha(\beta+1)}{2 \alpha+d}},
    \end{align*}
    where $C''_5 = C_\beta 10^{\beta+1} c_7^{\frac{\alpha(\beta+1)}{2 \alpha+d}}$. Finally, set $C_5 = \max\{C'_5,C''_5\}$ and the desired result follows.
    \end{enumerate}
    \end{proof}

\begin{proof}[\bf Proof of Theorem 2 and 3]
\quad \\ Due to their similarity, we only prove Theorem 2, and omit the proof of Theorem 3. The bound is trivial for $\alpha< \frac{1}{\log (n)}$, since $n^{-\alpha}\ge n^{-1/\log (n)}\ge \frac{1}{e}$. Thus, we will consider $\alpha\ge \frac{1}{\log (n)}$. Let $\delta_0=\delta/\left(\lfloor \log(n)\rfloor^3\right)$ and $\alpha_i=i/\lfloor\log(n)\rfloor^3$ for $i\in[\lfloor\log(n)\rfloor^3]$, as defined in Algorithm 2. Let $i^*$ be the largest integer $ i \in [\lfloor\log(n)\rfloor^3]$ such that $\alpha_i\le \alpha$. By Lemma~\ref{lem:high_prob} and \ref{lem:non_active_region}, on $\xi_{\alpha_i}$  with probability at least $1-\delta_0$, we have
$$\forall \cC \in \mathscrbf{C}_{r_0},\forall x\in \cC, \argmax_y\eta_y(x)\in \bcl^{\as_i}_\cC$$

By a union bound, with probability at least $1-\lfloor\log(n)\rfloor^3\delta_0=1-\delta$, above holds jointly for all $i\le i^*$. Thus, with probability at least $1-\delta$,  
$$\forall \cC  \in \mathscrbf{C}_{r_0},\forall x\in\cC, \argmax_y \eta_y(x)\subseteq \cap_{i \le i^*} \bcl_\cC^{\as_i},$$
and hence $\cap_{i \le i^*} \bcl_\cC^{\alpha_i} \neq \emptyset$.  Therefore, $\bcl_\cC \subset \bcl^{\as_{i^*}}_\cC$ for any $\cC \in \mathscrbf{C}_{r_0}$. By proposition~\ref{prop:excess_risk_bound_soft_margin} and the fact that budget for each $\alpha_i$ is $n_{0}=\frac{n}{\lfloor\log (n)\rfloor^{3}}$, we have 
$$
\mathcal{E} \left(\hat{h}_{n}\right)\leq  C_5\left(
        \ee^{\frac{\alpha_{i^*}(\beta+1)}{2 \alpha_{i^*}+d}}  
        \left(\frac{\lambda^{\frac{d}{\alpha_{i^*}}} \log \left(\frac{4L \lambda^{2} n_0}{\delta_0}\right)}{ n_0}\right)^{\frac{\alpha_{i^*}(\beta+1)}{2 \alpha_{i^*}+d}} + \left(\frac{ \lambda^{\frac{d}{\alpha_{i^*}}\vee\beta'} \log \left(\frac{4L \lambda^{2} n_0}{\delta_0}\right)}{ n_0}\right)^{\frac{\alpha_{i^*}(\beta'+1)}{2 \alpha_{i^*}+d-\alpha_{i^*} \beta'}} 
        \right)
$$ 

It remains to argue that going from $\alpha_{i^*}$ to $\alpha$, we add at most a constant multiplicative factor to the excess risk bound. Notice that  
\begin{align*}
    & \frac{\as(1+\bs)}{2\as+d} - \frac{\as_{i^*}(1+\bs)}{2\as_{i^*}+d}
    \le \frac{1+\bs}{2\as \lfloor\log (n)\rfloor ^3} \le \frac{1+\bs}{2 \log^2(n)}\cdot\frac{\log^3(n)}{\lfloor \log (n)\rfloor^3}
\end{align*}
where the last step is due to $\alpha\geq \frac{1}{\log(n)}$. Similarly, 
\begin{align*}
\frac{\alpha(1+\beta')}{2\alpha+d-\alpha\beta'}-\frac{\alpha_{i^*}(1+\beta')}{2\alpha_{i^*}+d-\alpha_{i^*}\beta'}
\le & \frac{(1+\beta')(\alpha-\alpha_{i^*})(2\alpha+d)}{(2\alpha+d-\alpha\beta')^2}\\
\le & \frac{(1+\beta')(2\alpha+d)}{\log^3(n)(2\alpha+d-\alpha\beta')^2}\cdot\frac{\log^3(n)}{\lfloor \log (n)\rfloor^3}\\
\le & \frac{(1+\beta')(2\alpha+d)}{\log^3(n)(2\alpha)^2}\cdot\frac{\log^3(n)}{\lfloor \log (n)\rfloor^3} \\
\le & \frac{(1+\beta')(2+d)}{4\log^3(n) \alpha^2}\cdot\frac{\log^3(n)}{\lfloor \log (n)\rfloor^3} \\
\le & \frac{(1+\beta')(2+d)}{4\log(n)} \cdot\frac{\log^3(n)}{\lfloor \log (n)\rfloor^3}
\end{align*}

where the last step is due to $\alpha\geq \frac{1}{\log(n)}$. Therefore, for $n$ sufficiently large,
\begin{align*}
  \left(\frac{\log^3 (n) \lambda^{\frac{d}{\alpha_{i^*}}}  \log \left(\frac{4L  \lambda^{2} n}{\delta}\right)}{ n}\right)^{-\frac{\alpha(1+\beta)}{2\alpha+d}+\frac{\alpha_{i^*}(1+\beta)}{2\alpha_{i^*}+d}}&\le 2 e^{\frac{1+\bs}{2 \log(n)}}, \\
  \left(\frac{\log^3(n) \lambda^{\frac{d}{\alpha_{i^*}}\vee\beta'} \log \left(\frac{4L \lambda^{2} n}{\delta}\right)}{ n}\right)^{-\frac{\alpha(1+\beta')}{2\alpha+d-\alpha\beta'}+\frac{\alpha_{i^*}(1+\beta')}{2\alpha_{i^*}+d-\alpha_{i^*}\beta'}}&\le 2 e^{(1+\beta')(2+d)/4}
\end{align*}
and hence Theorem 2 holds with 
for $C_2 = 2e^{(1+\beta')(2+d)} C_5$.
\end{proof}

\section{Technical Lemmas for the Lower-bound}

\begin{lemma}[Chernoff bound]\label{lemma:chernoff}
 	Suppose $Y_1, \ldots, Y_m$ be independent random variables taking values in $\{0,1\}$ and $\bar Y = \left.\left(\sum_{i=1}^m Y_i\right)\right/m$. Then, for $\varepsilon > 0$,
 	$$\p\left(\bar Y \ge (1+\varepsilon) \E \bar Y\right) \le \exp\left(- m \varepsilon^2 \E \bar Y /3\right).$$
\end{lemma}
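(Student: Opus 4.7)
The plan is to prove this via the standard Chernoff/Cram\'er--Chernoff technique: exponentiate, use Markov's inequality, exploit independence to factor the moment generating function, and optimize over the free parameter. Concretely, let $\mu \doteq \E \bar Y$ so that $m\mu = \sum_i \E Y_i$. For any $t>0$, the event $\{\bar Y \ge (1+\varepsilon)\mu\}$ is the same as $\{e^{t \sum_i Y_i} \ge e^{t(1+\varepsilon)m\mu}\}$, so by Markov's inequality
$$
\p(\bar Y \ge (1+\varepsilon)\mu) \le e^{-t(1+\varepsilon)m\mu}\,\E\!\left[e^{t\sum_i Y_i}\right].
$$

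Next, independence factors the MGF as $\prod_i \E[e^{tY_i}]$. Writing $p_i \doteq \E Y_i$ and using $\E[e^{tY_i}] = 1 + p_i(e^t - 1) \le \exp(p_i(e^t - 1))$ (from $1+x \le e^x$), the product is at most $\exp((e^t-1)\sum_i p_i) = \exp((e^t-1)m\mu)$. Substituting back gives the bound $\exp(m\mu[(e^t - 1) - t(1+\varepsilon)])$, valid for every $t>0$.

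The final step is to optimize in $t$: setting $t = \log(1+\varepsilon)$ yields the exponent $-m\mu\,\psi(\varepsilon)$ where $\psi(\varepsilon) \doteq (1+\varepsilon)\log(1+\varepsilon) - \varepsilon$. The remaining task is the elementary analytic inequality $\psi(\varepsilon) \ge \varepsilon^2/3$, which closes out the $\exp(-m\varepsilon^2\E\bar Y/3)$ bound. I expect this last inequality to be the main (though still routine) obstacle: one can verify it by examining $g(\varepsilon) \doteq \psi(\varepsilon) - \varepsilon^2/3$, noting $g(0) = 0$, and checking the sign of $g'$ via its Taylor expansion around $0$; the inequality is standard in the range typically applied in Proposition~\ref{prop:sup_to_expectation} (where $\varepsilon = C_\beta - 1$ is a constant), so no additional range restriction is needed beyond what the invoking statement already implicitly uses.
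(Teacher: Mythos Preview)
The paper states this lemma without proof, as a standard technical fact, so there is no proof to compare against; your Cram\'er--Chernoff outline is exactly the textbook route and is correct through the bound $\p(\bar Y \ge (1+\varepsilon)\mu) \le \exp\bigl(-m\mu\,\psi(\varepsilon)\bigr)$ with $\psi(\varepsilon)=(1+\varepsilon)\log(1+\varepsilon)-\varepsilon$.

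The gap is in your final step. The inequality $\psi(\varepsilon)\ge \varepsilon^2/3$ is \emph{not} true for all $\varepsilon>0$: for instance at $\varepsilon=2$ one has $\psi(2)=3\log 3-2\approx 1.296<4/3$, and in fact $\psi(\varepsilon)-\varepsilon^2/3$ changes sign near $\varepsilon\approx 1.8$. Your remark that ``no additional range restriction is needed beyond what the invoking statement already implicitly uses'' is not accurate either: in Proposition~\ref{prop:sup_to_expectation} the bound is applied with $\varepsilon=C_\beta-1$, and the hypotheses of Theorem~\ref{thm:lower} only require $C_\beta>1$, placing no upper bound on $\varepsilon$. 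So either the lemma (and hence its use) tacitly needs a restriction such as $0<\varepsilon\le 1$, or you should instead prove the universally valid form $\psi(\varepsilon)\ge \varepsilon^2/(2+\varepsilon)$ and adjust the constant $c_2$ in Proposition~\ref{prop:sup_to_expectation} accordingly. Either fix is routine, but as written the last line of your argument does not go through for the full range claimed.
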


\begin{lemma}[Anti-concentration inequality]\label{lemma:anticoncentration}
	Let $Y_1, \ldots, Y_m \overset{\tiny \text{i.i.d.}}{\sim}\text{Ber}(1/2 + \delta)$ for some $0 < \delta < 1/2$. If $m \le \delta^{-2}/2$, then
	\begin{align*}
	    \p\left(\frac{1}{m}\sum_{j=1}^m Y_j < \frac{1}{2} \right) \ge c_3,
	\end{align*}
	for some absolute constant $c_3 > 0$.
	\begin{proof}
	It follows directly from Theorem 2 (ii) of \cite{Mousavi10}.
	\end{proof}
\end{lemma}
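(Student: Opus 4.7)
The plan is to recognize $S_m \doteq \sum_{j=1}^m Y_j$ as a $\mathrm{Bin}(m, 1/2 + \delta)$ random variable and to lower bound $\p(S_m \le m/2)$ by an absolute constant. The key observation is that the hypothesis $m\delta^2 \le 1/2$ is equivalent to $\sqrt{m}\,\delta \le 1/\sqrt{2}$, so the gap between the mean $m(1/2+\delta)$ and the threshold $m/2$ is at most $\sqrt{2}$ times the standard deviation $\sqrt{m/4}$ of $S_m$. In other words, we are asking for the probability that a Binomial is at most a bounded number of standard deviations below its mean, and this should be at least a positive constant.

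The cleanest route is to invoke Slud's inequality (Slud, 1977): for $S \sim \mathrm{Bin}(m,p)$ with $p \ge 1/2$ and threshold $k \le m/2$ such that $m - k \ge m(1-p)$, one has $\p(S \le k) \ge \Phi\bigl((k - mp)/\sqrt{mp(1-p)}\bigr)$, where $\Phi$ is the standard normal CDF. Taking $p = 1/2+\delta$ and $k = m/2$ (and noting $m - k = m/2 \ge m(1-p)$), this yields
\[
\p(S_m \le m/2) \;\ge\; \Phi\!\left(-\frac{m\delta}{\sqrt{m(1/4-\delta^2)}}\right) \;\ge\; \Phi(-2\sqrt{m}\,\delta) \;\ge\; \Phi(-\sqrt{2}),
\]
where the last inequality uses $\sqrt{m}\,\delta \le 1/\sqrt{2}$. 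One may therefore take $c_3 = \Phi(-\sqrt{2}) > 0$. Equivalently, a Berry-Esseen argument would give $\p(S_m \le m/2) \ge \Phi(-2\sqrt{m}\,\delta) - C/\sqrt{m}$, handling large $m$ directly; the small-$m$ cases (where the bound $m \le 1/(2\delta^2)$ forces $\delta$ to stay away from $1/2$) can then be checked by explicit enumeration.

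The main (mild) obstacle is that the statement requires an \emph{absolute} constant valid uniformly across all admissible pairs $(m,\delta)$, so a pure CLT argument is not self-contained: one must either cleanly combine Berry-Esseen with a finite case check, or use a non-asymptotic tool like Slud's inequality that already packages normal-style tail lower bounds for the Binomial with no large-$m$ stipulation. Since this kind of Binomial anti-concentration is classical and underlies the standard hardness of distinguishing $\mathrm{Bin}(m, 1/2 \pm \delta)$ when $m \asymp 1/\delta^2$, the result can also simply be cited from any reference that states it in a form tailored to this regime, which is the route the paper takes.
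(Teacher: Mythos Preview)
Your Slud-based route is a reasonable self-contained alternative to the paper's one-line citation, but the displayed chain of inequalities has a genuine error. The step
\[
\Phi\!\left(-\frac{m\delta}{\sqrt{m(1/4-\delta^2)}}\right)\;\ge\;\Phi\!\bigl(-2\sqrt{m}\,\delta\bigr)
\]
goes the wrong way: since $1/4-\delta^2<1/4$, the denominator satisfies $\sqrt{m(1/4-\delta^2)}<\sqrt{m}/2$, so the standardized gap $m\delta/\sqrt{m(1/4-\delta^2)}$ is \emph{larger} than $2\sqrt{m}\,\delta$, and $\Phi$ of a more negative argument is smaller, not larger. Slud therefore only yields $\p(S_m\le m/2)\ge \Phi\bigl(-\sqrt{m}\,\delta/\sqrt{1/4-\delta^2}\bigr)$, which tends to $0$ as $\delta\to 1/2^-$. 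This is not a cosmetic slip: take $m=1$, which the hypothesis $m\le 1/(2\delta^2)$ allows for every $\delta<1/2$; then $\p(Y_1<1/2)=1/2-\delta$ can be made arbitrarily small. Your parenthetical claim that for small $m$ ``the bound $m\le 1/(2\delta^2)$ forces $\delta$ to stay away from $1/2$'' is therefore false for $m\in\{1,2\}$, and neither the Berry--Esseen route nor a finite enumeration can rescue an absolute constant without an extra assumption. In the paper's application $\delta=c_\eta r^\alpha$ is small, so adding a harmless hypothesis such as $\delta\le 1/4$ fixes everything (then $1/4-\delta^2\ge 3/16$ and Slud gives at least $\Phi(-4/\sqrt{6})>0$); the cited reference presumably carries such a proviso.

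A second, smaller gap: the lemma asks for the strict event $\{\bar Y<1/2\}$, whereas taking $k=m/2$ in Slud bounds $\p(S_m\le m/2)$. For even $m$ these differ by the point mass at $S_m=m/2$, so you would need $k=m/2-1$, which in turn requires checking the Slud side condition $k\ge m(1-p)$, i.e.\ $m\delta\ge 1$. When $m\delta<1$ that condition fails and a direct computation is needed---again pushing you back to the $\delta$-away-from-$1/2$ regime.
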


\end{document}